\def\eqref#1{equation~\ref{#1}}
\def\1{\bm{1}}
\DeclareMathAlphabet{\mathsfit}{\encodingdefault}{\sfdefault}{m}{sl}
\SetMathAlphabet{\mathsfit}{bold}{\encodingdefault}{\sfdefault}{bx}{n}
\DeclareMathOperator*{\argmax}{arg\,max}
\theoremstyle{plain}
\newtheorem{theorem}{Theorem}[section]
\newtheorem{proposition}[theorem]{Proposition}
\theoremstyle{definition}
\newtheorem{definition}[theorem]{Definition}
\theoremstyle{remark}
\newcolumntype{Y}{>{\centering\arraybackslash}X}
\newcommand{\legendDash}[2][1.6em]{
  \tikz[baseline=-0.6ex]\draw[dashed,line width=#2] (0,0)--(#1,0);}
\newcommand{\legendCircleLineUnder}[3][1.8em]{%
  \tikz[baseline={([yshift=-0.5ex]current bounding box.center)}]{
    \draw[line width=#2] (0,0)--(#1,0);
    \node[circle,fill,inner sep=0pt,minimum size=#3] at ($ (0,0)!0.5!(#1,0) $) {};
  }%
}
\newcommand{\legendSquareLineUnder}[3][1.8em]{%
  \tikz[baseline={([yshift=-0.5ex]current bounding box.center)}]{
    \draw[line width=#2] (0,0)--(#1,0);
    \node[fill,inner sep=0pt,minimum width=#3,minimum height=#3] at ($ (0,0)!0.5!(#1,0) $) {};
  }%
}
\newcommand{\legendTriangleLineUnder}[3][1.8em]{%
  \tikz[baseline={([yshift=-0.5ex]current bounding box.center)}]{%
    \draw[line width=#2] (0,0)--(#1,0);
    \node[regular polygon,regular polygon sides=3,shape border rotate=0,
          fill,inner sep=0pt,minimum size=#3]
         at ($ (0,0)!0.5!(#1,0) $) {};
  }%
}
\title{TrustJudge: Inconsistencies of LLM-as-a-Judge and How to Alleviate Them}
\author{
Yidong Wang$^{1}$\thanks{Equal contribution.} \quad
Yunze Song$^{2}$\footnotemark[1] \quad
Tingyuan Zhu$^{3}$ \AND
Xuanwang Zhang$^{4}$ \quad
Zhuohao Yu$^{1}$ \quad
Hao Chen$^{5}$ \quad
Chiyu Song$^{6}$ \quad
Qiufeng Wang$^{7}$ \AND
Cunxiang Wang$^{6}$ \quad
Zhen Wu$^{4}$ \quad
Xinyu Dai$^{4}$ \quad
Yue Zhang$^{6}$ \quad
Wei Ye$^{1}$\thanks{Correspondence: \texttt{wye@pku.edu.cn}, \texttt{zhangsk@pku.edu.cn}.} \And
Shikun Zhang$^{1}$\footnotemark[2]
\AND
$^{1}$ Peking University \quad
$^{2}$ National University of Singapore \quad
$^{3}$ Institute of Science Tokyo \AND
$^{4}$ Nanjing University \quad
$^{5}$ Carnegie Mellon Unversity\quad
$^{6}$ Westlake University \quad
$^{7}$ Southeast University
}
\begin{document}

\maketitle

\newcommand{\ourmethod}{TrustJudge}

\begin{abstract}
The adoption of Large Language Models (LLMs) as automated evaluators (LLM-as-a-judge) has revealed critical inconsistencies in current evaluation frameworks. We identify two fundamental types of inconsistencies: (1) \textit{Score-Comparison Inconsistency}, where lower-rated responses outperform higher-scored ones in pairwise comparisons, and (2) \textit{Pairwise Transitivity Inconsistency}, manifested through circular preference chains ($A\!>\!B\!>\!C\!>\!A$) and equivalence contradictions ($A\!=\!B\!=\!C\!\neq\!A$). We argue that these issues come from information loss in discrete rating systems and ambiguous tie judgments during pairwise evaluation. We propose \textbf{TrustJudge}, a probabilistic framework that addresses these limitations through two key innovations: 1) \textit{distribution-sensitive scoring} that computes continuous expectations from discrete rating probabilities, preserving information entropy for more precise scoring, and 2) \textit{likelihood-aware aggregation} that resolves transitivity violations using bidirectional preference probabilities or perplexity. We also formalize the theoretical limitations of current LLM-as-a-judge frameworks and demonstrate how TrustJudge’s components overcome them. When evaluated with Llama-3.1-70B-Instruct as judge using our dataset, TrustJudge reduces Score-Comparison inconsistency by 8.43\% (from 23.32\% to 14.89\%) and Pairwise Transitivity inconsistency by 10.82\% (from 15.22\% to 4.40\%), while maintaining higher evaluation accuracy. Our work provides the first systematic analysis of evaluation framework inconsistencies in LLM-as-a-judge paradigms, offering both theoretical insights and practical solutions for reliable automated assessment. The framework demonstrates consistent improvements across various model architectures and scales, enabling more trustworthy LLM evaluation without requiring additional training or human annotations. The codes can be found at \url{https://github.com/TrustJudge/TrustJudge}.
\end{abstract}

\section{Introduction}

The widespread adoption of LLM-as-a-judge approaches has offered a scalable and effective alternative to costly human assessments~\cite{chang2024survey,fu2023gptscore, lin2023llm, sottana2023evaluation, huang2024empirical, koutcheme2024open, song-etal-2024-finesure,zhu2023judgelm}. Beyond evaluation, this paradigm also actively contributes to model improvement, enabling self-refinement through iterative feedback~\cite{selfrewarding,metarewarding,wang2025temporal} and collaborative progress via mutual assessment~\cite{pandalm,autoj}. These LLM-as-a-judge frameworks typically implement two evaluation protocols~\cite{li2023generative,chen2024mllm,li2025preference,chen2024humans,tan2024judgebench,thakur2024judging,szymanski2025limitations,raju2024constructing}n: \textit{single-score assessment}, where a judge LLM (either general-purpose or specifically fine-tuned for evaluation) assigns integer numerical ratings to model outputs~\cite{mtbench,wang2024autosurvey}, and \textit{pairwise comparison}, where the judge evaluates competing responses in direct comparison (with the order of responses swapped in two separate evaluations to eliminate position bias) to produce preference judgments~\cite{alpacaeval,pandalm,arenahard}.

However, our research identifies two critical inconsistencies in these LLM-as-a-judge evaluation frameworks for LLMs: (1) \textbf{Score-Comparison Inconsistency} between single-score and pairwise comparison assessment, where LLMs with lower absolute scores may outperform higher-scored counterparts in pairwise comparisons ($R_x \succ R_y$ despite $score(R_x) < score(R_y)$)\footnote{We use $R_x, R_y, R_z$ to represent distinct LLM responses, $\succ$: strictly preferred; $\prec$: strictly worse; $\succeq$: preferred or equivalent; $\preceq$: worse or equivalent; $\equiv$: equivalent; $\neq$: not equivalent}; and (2) \textbf{Pairwise Transitivity Inconsistency} in pairwise comparison evaluation, where judgments show non-transitive cycles ($R_x \succ R_y \succ R_z \succ R_x$) and equivalence contradictions ($R_x \equiv R_y \equiv R_z \neq R_x$), violating rational preference principles. While prior work addresses pairwise inconsistencies through complex mathematical modeling~\cite{xuinvestigating,zhang2025beyond}, such continual training risks compromising model generalizability~\cite{luo2023empirical,lin2024mitigating} without resolving score-comparison conflicts. To the best of our knowledge, this is the first work to systematically expose both inconsistencies as foundational weaknesses in LLM-as-a-judge frameworks and to provide a unified resolution.

\begin{figure}[htbp]
  \centering
  \includegraphics[height=4.25cm]{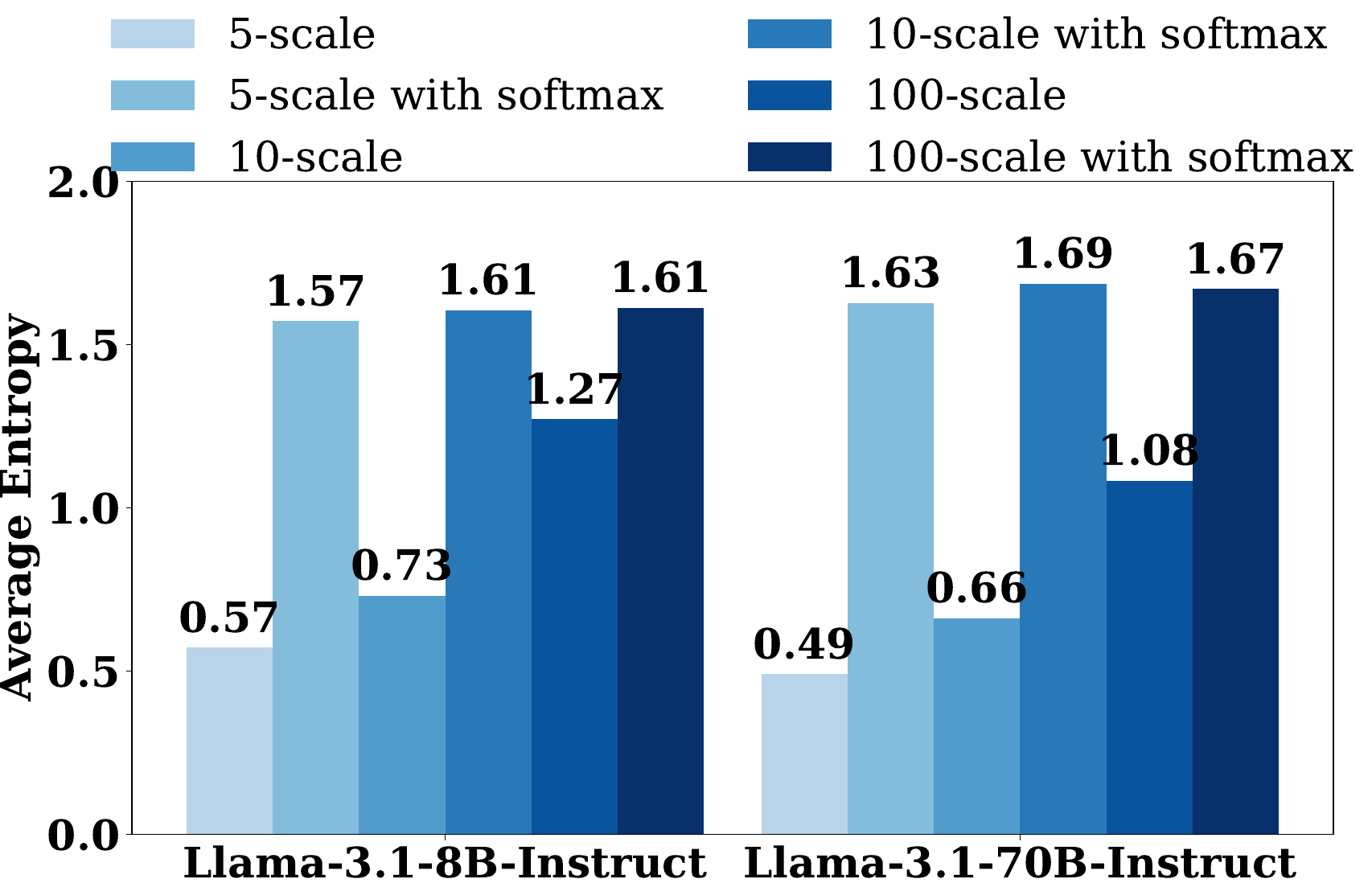}\quad
  \includegraphics[height=4.25cm]{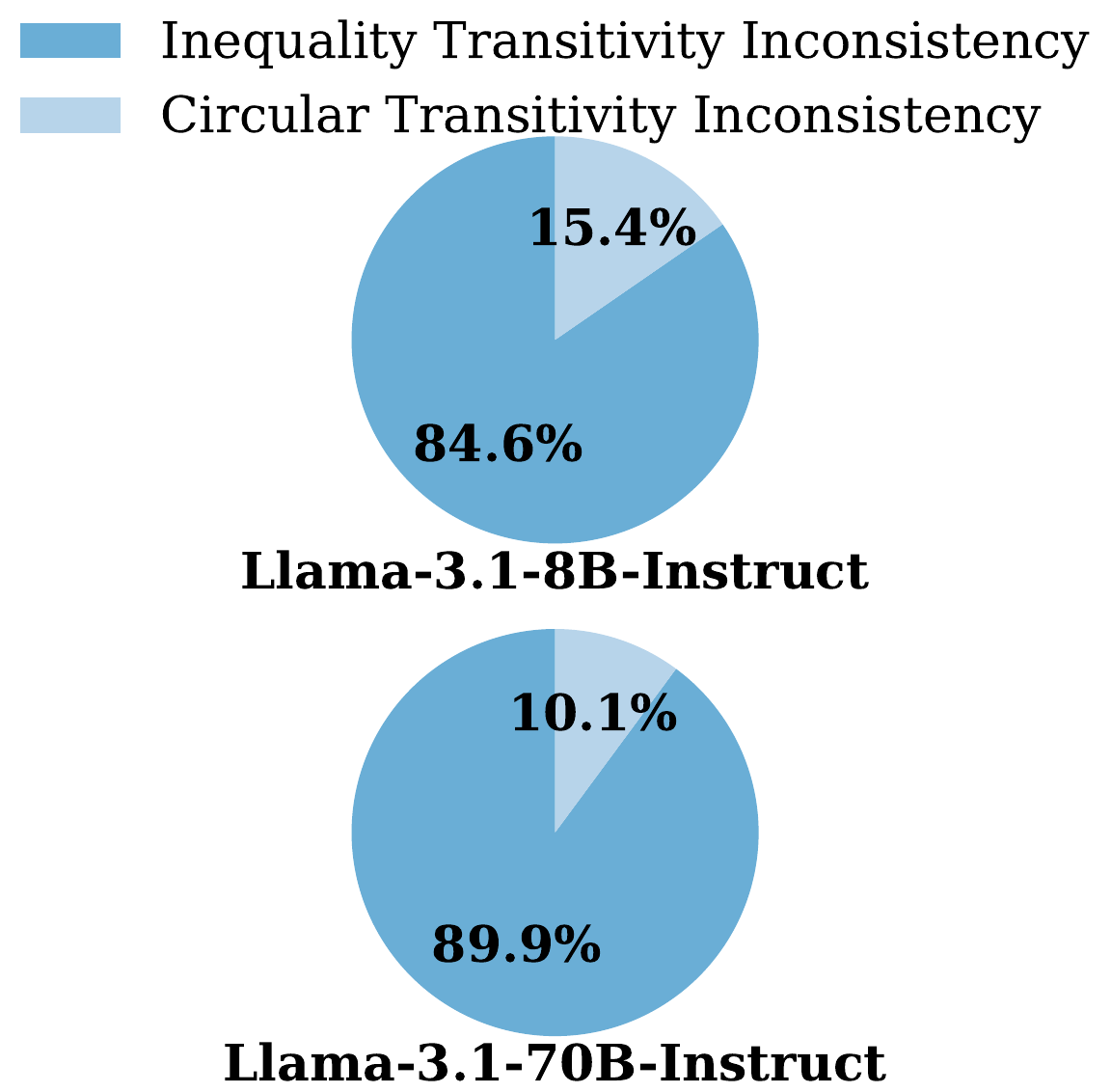}
  \caption{
 Left: Average entropy of Llama-3~\cite{llama3} models’ single-score outputs over six rounds on 1,200 instructions, by scoring strategy.
Right: Breakdown of circular- vs. inequality-transitivity errors in pairwise-comparison tests.
  }
  \label{fig:two-plots}
\end{figure}

To address these inconsistencies, we introduce \textbf{TrustJudge}, a probabilistic evaluation framework that preserves judgment entropy while resolving both (a) score-comparison conflicts via distribution-sensitive scoring and (b) transitivity violations through likelihood-aware aggregation. As shown in Figure~\ref{fig:two-plots}, we argue that the score-comparison inconsistency primarily stems from information loss in the integer scoring system—the coarse-grained 5-point scale compresses nuanced quality differences into identical scores (e.g., different quality responses both receiving 4 points), resulting in low entropy judgments that fail to discriminate actual performance gaps. We propose two effective solutions: (1) increasing scoring granularity (5→10→100 points) to preserve more discriminative information, and (2) probabilistic scoring that maintains the full entropy of model judgments. For pairwise transitivity inconsistency, we find most of inconsistencies originate from tie judgments (equivalence contradictions). We propose breaking ambiguous ties by either (1) preferring responses whose entire sentence exhibits lower perplexity, or (2) deciding preference based on the judge's token-level confidence for win, tie and lose. These approaches significantly reduce inconsistency rates while maintaining the scalability and effectiveness of LLM-as-a-judge frameworks. While recent works like \cite{geval,wang2025improving} adopt probabilistic scoring to enhance human alignment, our motivation differs fundamentally in addressing fundamental inconsistencies of evaluation frameworks rather than improving human-judge agreement. Our probabilistic scoring serves as an entropy-preserving mechanism with granularity enhancement to resolve score-comparison conflicts. Our theoretical analysis shows that discrete scoring systems suffer from information loss by showing that distinct response distributions can yield identical scores despite different entropies, whereas TrustJudge’s distribution-sensitive scoring preserves and distinguishes these differences, and further demonstrates that its PPL-based confidence distribution reduces uncertainty in ambiguous cases by leveraging perplexity to create a lower-entropy signal for decision-making.

Extensive experimental results across multiple model families (Llama-3, GPT, Qwen, Gemma) and scales (3B to 70B parameters) demonstrate TrustJudge's effectiveness. Our framework reduces Score-Comparison inconsistency by 8.43\% (from 23.32\% to 14.89\%) and Pairwise Transitivity inconsistency by 10.82\% (from 15.22\% to 4.40\%) when using Llama-3.1-70B-Instruct as judge. These improvements are achieved while maintaining or improving evaluation accuracy, with exact match rates increasing by 1.19-6.85\% across different model sizes. Our ablation studies confirm that both distribution-sensitive scoring and likelihood-aware aggregation contribute significantly to these improvements, with the full framework achieving the best performance across all tested scenarios.

In conclusion, we present the first systematic analysis of fundamental inconsistencies in LLM-as-a-judge evaluation frameworks, identifying two critical limitations: (1) information loss in discrete scoring systems causing Score-Comparison Inconsistency, and (2) ambiguous tie judgments leading to Pairwise Transitivity Inconsistency. TrustJudge addresses these through distribution-sensitive scoring that preserves judgment entropy and likelihood-aware aggregation to break ambiguous ties. Our experiments demonstrate significant inconsistency reductions while maintaining evaluation accuracy across diverse models and tasks. This work provides both insights into LLM evaluation limitations and practical solutions for more reliable automated assessment, enabling more trustworthy deployment of LLM-as-a-judge paradigms in research and applications. In addition, we provide a more detailed review of related work in Appendix~\ref{app:related}.

\section{Methodology}

Our framework addresses two fundamental inconsistencies in LLM-as-a-judge systems through formal definitions and quantitative metrics. We first establish mathematical characterizations of these inconsistencies, then present our TrustJudge algorithm.

\subsection{Inconsistency Definitions and Metrics}

\begin{definition}[Score-Comparison Inconsistency]
\label{def-single}
For responses $R_x,R_y$ with scores $S_x,S_y \in \mathbb{Z}$ (e.g., 1-5 scale) and pairwise comparison $C=C(R_x,R_y) \in \{-1,0,1\}$ (1: $R_x$ preferred, -1: $R_y$ preferred, 0: tie), inconsistency occurs when:
\begin{equation}
    (S_x > S_y \land C \leq 0) \lor (S_x < S_y \land C \geq 0) \lor (S_x = S_y \land C \neq 0).
\end{equation}

The \textbf{Conflict Ratio} $CR = \frac{1}{n}\sum_{i=1}^n \mathbb{I}[\text{inconsistent pair } i]$ measures prevalence, where $n$ is total pair numbers in the test set and $\mathbb{I}[\cdot]$ is the indicator function.
\end{definition}

\begin{definition}[Pairwise Transitivity Inconsistency]
\label{def-pairwise}
For a set of $n$ responses $\mathbb{R}_n = \{R_1,\ldots,R_n\}$ and its subsets $\mathbb{R}_k$ of size $k \geq 3$, three responses $R_x, R_y, R_z \in \mathbb{R}_k$ satisfy one of the following transitivity violations:



\begin{itemize}[itemsep=1ex, parsep=0.5ex, topsep=1ex]
  \item \textbf{Circular inconsistency}: 
    $%
      C(R_x,R_y)=1\;\land\;C(R_y,R_z)=1\;\land\;C(R_z,R_x)\neq -1.
    $\refstepcounter{equation}\label{eq:circular}\hfill(\theequation)\\
    (forming a preference cycle $R_x\succ R_y\succ R_z\not\prec R_x$)

  \item \textbf{Inequality inconsistency}: 
    $%
      C(R_x,R_y)=0\;\land\;C(R_y,R_z)=0\;\land\;C(R_x,R_z)\neq 0.
    $\refstepcounter{equation}\label{eq:inequality}\hfill(\theequation)\\
    (violating transitivity of indifference)
\end{itemize}

The \textbf{Non-Transitivity Ratio} is defined as $\mathrm{NTR}_k = \frac{V_k}{\binom{n}{k}},$
where $V_k$ denotes the number of $k$-size subsets exhibiting either inconsistency type and \(\binom{n}{k}\) represents the binomial coefficient counting all possible \(k\)-size subsets from \(n\) elements.
\end{definition}

\begin{algorithm}[ht]
\caption{TrustJudge Evaluation Framework}
\label{alg:trustjudge}
\begin{algorithmic}[1]
\REQUIRE Responses $R_x$, $R_y$ (pairwise) or $R$ (single), expanded scores $\Theta'$ (range $[s'_{\min}, s'_{\max}]$), original range $[s_{\min}, s_{\max}]$
\ENSURE Score $S$ or comparison $C(R_x, R_y)$

\IF{Single-Score Evaluation}
    \STATE $P(s'_j|R) \gets \text{Softmax}(P_o(s'_j|R))$ \COMMENT{Normalize probabilities}
    \STATE $S' \gets \sum_{j=s'_{min}}^{s'_{max}} s'_j P(s'_j | R)$ \COMMENT{Expected expanded score}
    \STATE $S \gets S' \times \frac{s_{\max} - s_{\min}}{s'_{\max} - s'_{\min}}$ \COMMENT{Scale to original range}
    \STATE \textbf{return} $S$ 
\ELSE[Pairwise Comparison]
    \STATE \textbf{Option A: PPL-Based}
    \STATE $\text{PPL}_1 \gets \text{PPL}(\mathcal{M}, R_x, R_y)$ \COMMENT{Perplexity of $R_x$ followed by $R_y$}
    \STATE $\text{PPL}_2 \gets \text{PPL}(\mathcal{M}, R_y, R_x)$ \COMMENT{Perplexity of reverse ordering}
    \STATE $C(R_x, R_y) \gets \begin{cases}
        C_{\text{order}_1} & \text{if } \text{PPL}_1 < \text{PPL}_2 \\
        C_{\text{order}_2} & \text{otherwise}
    \end{cases}$ \COMMENT{Determine by comparing PPL}
    
    \STATE \textbf{Option B:  Likelihood-aware Aggregation}
    \STATE $\mathbf{p}_1 \gets \mathrm{Prob}(\mathcal{M}, R_x, R_y)$ \COMMENT{Probabilities for $R_x$ vs $R_y$}
    \STATE $\mathbf{p}_2 \gets \mathrm{Prob}(\mathcal{M}, R_y, R_x)$ \COMMENT{Probabilities for reverse comparison}
    \STATE $m[k] \gets \mathbf{p}_1[k] + \mathbf{p}_2[-k]$ for $k\in\{1,-1,0\}$ \COMMENT{Aggregate both directions}
    \STATE \textbf{return} $\arg\max_k m[k]$ \COMMENT{Select most probable outcome}
\ENDIF
\end{algorithmic}
\end{algorithm}

\subsection{TrustJudge}
As shown in Algorithm~\ref{alg:trustjudge}, the TrustJudge framework is a probabilistic evaluation approach that preserves judgment entropy while resolving score-comparison conflicts and transitivity violations. The framework operates differently for single-score evaluation and pairwise comparison tasks, maintaining consistency between these two evaluation protocols.

For single-score evaluation, TrustJudge employs a \emph{distribution-sensitive scoring} mechanism. Given a response $R$ to be assessed, the framework first prompts the LLM to score on a more fine-grained scale than original (e.g., a $100$-point scale when the original scale is $5$-point). It then transforms the resulting discrete probability distribution $P_o(s'_j \mid R)$ over the expanded score set $\Theta' = \{s'_{min},\dots,s'_{max}\}$ into logits $\ell_j$ for each candidate score $s'_j$. These logits are then processed by a softmax function which normalize the logits into a valid probability distribution $P(s'_j|R)$. Unlike other approaches such as G-Eval~\cite{geval}, whose generated probabilities can violate $\sum_j P(s'_j \mid R) = 1$ because non-score tokens also influence the output, our method ensures a properly normalized distribution. The final score $S$ is computed as the expected value, scaled back to the original range $[s_{\min}, s_{\max}]$:

\begin{equation}
S = \left(\sum_{j=s'_{min}}^{s'_{max}} s'_j \cdot \frac{\exp(P_o(s'_j|R))}{\sum \exp(P_o(s'_k|R))}\right) \times \frac{s_{\max} - s_{\min}}{s'_{\max} - s'_{\min}},
\end{equation}

where $P(s'_j|R)$ represents the original probability for score $s'_j$. This approach preserves the full entropy of the judge's assessment while producing continuous scores that maintain fine-grained distinctions between response qualities.

For pairwise comparison tasks, TrustJudge offers \emph{likelihood-aware aggregation} methods to resolve transitivity inconsistencies. The first approach (Option A) leverages \emph{perplexity-based (PPL-based) method} to break ties when the judge exhibits ambiguity. Given two responses $R_x$ and $R_y$, the framework computes the perplexity of both possible orderings ($R_x$ followed by $R_y$ and vice versa) under the judge model $\mathcal{M}$. The comparison result $C(R_x,R_y)$ is determined by selecting the ordering with lower perplexity:

\begin{equation}
C(R_x,R_y) = 
\begin{cases}
C_{\text{order}_1}, & \text{if } \text{PPL}(\mathcal{M}, R_x, R_y) < \text{PPL}(\mathcal{M}, R_y, R_x), \\
C_{\text{order}_2}, & \text{otherwise}.
\end{cases}
\end{equation}

The second approach (Option B) employs a bidirectional probability based method that combines preference probabilities from both orderings to reduce position bias. For each possible outcome $k \in \{1, -1, 0\}$ (representing $R_x$ preferred, $R_y$ preferred, or tie), the framework aggregates the probabilities from both orderings:

\begin{equation}
m[k] = \mathbf{p}_{\text{order}_1}[k] + \mathbf{p}_{\text{order}_2}[-k].
\end{equation}

where $\mathbf{p}_{\text{order}_1}$ and $\mathbf{p}_{\text{order}_2}$ are the probability vectors for the two orderings. The final comparison result is determined by selecting the outcome with maximum aggregated probability $k^* = \arg\max_k m[k]$. This probabilistic approach significantly reduces circular and inequality transitivity violations while maintaining the scalability of pairwise comparisons.

By producing nearly continuous score distributions, a probabilistic judge makes exact equality between two responses much less likely than traditional discrete grading. To relax the tie criterion, we can introduce a tolerance hyper-parameter $\delta \geq 0$. Whenever the discrepancy between two responses—absolute score difference, PPL gap, or probability margin—does not exceed $\delta$, the pair is declared a tie, allowing users to tune the granularity of the final ranking without retraining the model. Although $\delta$ is set to 0 by default, we have conducted a thorough hyper-parameter sweep that confirms TrustJudge's reliability across a range of $\delta$ values; the results recommend a small positive $\delta$, because---even with $\delta=0$---the framework already produces a noticeable number of ties.

\section{Theoretical Analysis}

In this section, we formalize the theoretical weaknesses of current LLM-as-a-judge frameworks and prove how TrustJudge's components address them. The detailed analysis and derivation can be found at Appendix~\ref{appendix:Derivation}. We start by proving that discrete scoring systems suffer from information loss.

\begin{theorem}[Information Loss of Discrete Scoring and Preservation in Expectation]
\label{thm:info_loss}
Let $p_{R_1}$ and $p_{R_2}$ be two distinct probability distributions over the score set $\Theta$ representing the judge model's assessment of two different responses, $R_1$ and $R_2$ (i.e., $p_{R_1} \neq p_{R_2}$). The discrete scoring function $f_{\text{Discrete}}$ can fail to distinguish between these two assessments, whereas the distribution-sensitive scoring function $f_{\text{DS}}$ provides a mechanism for their discrimination. Specifically:
\begin{enumerate}
    \item \textbf{(Information Loss):} There exist $p_{R_1} \neq p_{R_2}$ with different conditional entropies, $H(S|R_1) \neq H(S|R_2)$, such that their discrete scores are identical: $f_{\text{Discrete}}(p_{R_1}) = f_{\text{Discrete}}(p_{R_2})$.
    \item \textbf{(Information Preservation):} For the same distributions $p_{R_1}$ and $p_{R_2}$ constructed in (1), their distribution-sensitive scores are distinct: $f_{\text{DS}}(p_{R_1}) \neq f_{\text{DS}}(p_{R_2})$.
\end{enumerate}
\end{theorem}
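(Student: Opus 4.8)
The plan is to prove both parts by exhibiting a single explicit witness pair, since each claim is an existence statement. First I would pin down the two scoring maps as the paper uses them implicitly: the discrete score $f_{\text{Discrete}}(p)=\argmax_{s\in\Theta}p(s)$ returns the single most probable grade (the behaviour of a judge that emits one score token), whereas the distribution-sensitive score is the expectation $f_{\text{DS}}(p)=\sum_{s\in\Theta}s\,p(s)$, optionally followed by the affine rescaling of Algorithm~\ref{alg:trustjudge}. The structural fact driving the whole theorem is the asymmetry between these maps: $f_{\text{Discrete}}$ depends only on the location of the mode and discards every other feature of $p$, while $f_{\text{DS}}$ is a linear functional of the entire probability vector. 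This is precisely why a collision in $f_{\text{Discrete}}$ (Part 1) can coexist with a separation in $f_{\text{DS}}$ (Part 2).

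For Part 1 I would take two distributions on, say, $\Theta=\{1,2,3,4,5\}$ that share a mode but differ in shape, for instance the point mass $p_{R_1}=(0,0,0,1,0)$ and the spread distribution $p_{R_2}=(0.1,0,0,0.5,0.4)$. Both assign their strict maximum to the grade $4$, so $f_{\text{Discrete}}(p_{R_1})=f_{\text{Discrete}}(p_{R_2})=4$, giving the required collision. Their Shannon entropies differ trivially: $H(p_{R_1})=0$ since $p_{R_1}$ is deterministic, whereas $H(p_{R_2})>0$ because $p_{R_2}$ has support of size three. Reading $H(S\mid R_i)$ as the entropy of the grade distribution conditioned on the fixed response $R_i$ (so it collapses to $H(p_{R_i})$), this establishes $H(S\mid R_1)\neq H(S\mid R_2)$, and the two distributions are distinct by inspection.

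For Part 2 I would evaluate the expectation on the same pair: $\sum_s s\,p_{R_1}(s)=4$ while $\sum_s s\,p_{R_2}(s)=0.1\cdot1+0.5\cdot4+0.4\cdot5=4.1$, which are unequal. Since the rescaling factor $\frac{s_{\max}-s_{\min}}{s'_{\max}-s'_{\min}}$ is a strictly positive constant, it preserves inequalities, so $f_{\text{DS}}(p_{R_1})\neq f_{\text{DS}}(p_{R_2})$ regardless of whether $\Theta$ is the original or the expanded scale. I would also note that the softmax normalisation of Algorithm~\ref{alg:trustjudge} is immaterial here: it only enforces $\sum_s p(s)=1$, and the $p_{R_i}$ are already normalised, so $f_{\text{DS}}$ reduces to the plain expectation.

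The construction is elementary, so the difficulty is definitional rather than analytic. I expect the step requiring the most care to be justifying that $f_{\text{Discrete}}$ is correctly modelled as the mode rather than, say, a rounded expectation, together with handling argmax tie-breaking so that the equality of modes is unambiguous; this is exactly why I pick $p_{R_2}$ with a strict maximum at grade $4$ (probability $0.5$ against $0.4$ and $0.1$), leaving no tie to resolve. A secondary subtlety is interpreting the conditional-entropy notation $H(S\mid R_i)$, which I treat as the Shannon entropy of $p_{R_i}$ since $R_i$ is held fixed.
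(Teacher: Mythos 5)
Your proposal is correct and takes essentially the same route as the paper's proof: both establish the two parts simultaneously by constructing an explicit witness pair that shares a strict mode (forcing the $f_{\text{Discrete}}$ collision) while differing in entropy and in expectation (yielding the $f_{\text{DS}}$ separation). The only difference is presentational---the paper uses a parametric two-point family $(1-\epsilon_i,\epsilon_i)$ on a common support $\{s_m,s_a\}$ with $\epsilon_1\neq\epsilon_2\in(0,0.5)$, proving the entropy gap via monotonicity of binary entropy and the expectation gap via the identity $(\epsilon_1-\epsilon_2)(s_a-s_m)=0$, whereas you verify a single concrete instance on the 5-point scale ($H=0$ versus $H>0$, mean $4$ versus $4.1$) by direct computation, with your handling of strict-mode tie-breaking and the positive rescaling factor matching the paper's implicit assumptions.
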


For pairwise comparisons~\footnote{Please see more analysis of the bidirectional probability based method in Appendix~\ref{appendix:Likelihood-Aware}}, we formalize how the PPL-based method reduces the uncertainty caused by ambiguity, the proof of which is deferred to the Appendix~\ref{appendix:Derivation}.
\begin{proposition}[Uncertainty Reduction via PPL-based Method]
\label{prop:uncertainty_reduction}
Let $H(C|\pi)$ be the Shannon entropy of the judge model's outcome distribution. In an \textbf{ambiguous regime}, this entropy is maximized, $H(C|\pi) \approx \log|\mathcal{C}|$. We define a confidence distribution $p_{\text{conf}}$ based on the perplexity of the rationale $J_k$ for each outcome $k$:
\begin{align}
p_{\text{conf}}(k) \propto \exp(-\gamma \cdot \text{PPL}(J_k)), \quad \text{for a scaling constant } \gamma > 0.
\end{align}
If the rationale perplexities are not all equal, then $p_{\text{conf}}$ is non-uniform. By the properties of Shannon entropy, this implies its entropy is strictly less than the maximum:
\begin{align}
    H(p_{\text{conf}}) < \log|\mathcal{C}| .
\end{align}
Thus, the PPL-based method makes its decision using a more certain (lower-entropy) signal.
\end{proposition}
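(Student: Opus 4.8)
The plan is to reduce the claim to a single classical fact: over a finite alphabet of fixed size, the uniform distribution is the \emph{unique} maximizer of Shannon entropy, so any non-uniform distribution on that alphabet has strictly smaller entropy. Everything else is bookkeeping that establishes $p_{\text{conf}}$ as a genuine probability distribution and verifies that the hypothesis forces it to be non-uniform.

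First I would make the normalization explicit. Writing $Z = \sum_{k'\in\mathcal{C}} \exp(-\gamma\,\text{PPL}(J_{k'}))$, the definition becomes
\begin{equation}
p_{\text{conf}}(k) = \frac{\exp(-\gamma\,\text{PPL}(J_k))}{Z},
\end{equation}
which is strictly positive and sums to one, hence a valid distribution on the outcome set $\mathcal{C}$. Since $\gamma > 0$, the map $x \mapsto \exp(-\gamma x)/Z$ is strictly decreasing, so $\text{PPL}(J_i) \neq \text{PPL}(J_j)$ implies $p_{\text{conf}}(i) \neq p_{\text{conf}}(j)$. The hypothesis that the rationale perplexities are not all equal therefore guarantees at least one such pair, which is precisely the statement that $p_{\text{conf}}$ differs from the uniform distribution $u$ on $\mathcal{C}$.

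Second I would invoke the maximum-entropy bound in its sharp, equality-characterized form, proving it via Gibbs' inequality rather than merely citing it. Letting $u(k) = 1/|\mathcal{C}|$, a direct computation gives
\begin{equation}
\log|\mathcal{C}| - H(p_{\text{conf}}) = \KL\!\left(p_{\text{conf}} \,\|\, u\right) \ge 0,
\end{equation}
with equality if and only if $p_{\text{conf}} = u$. Because the previous step established $p_{\text{conf}} \neq u$, the inequality is strict, giving $H(p_{\text{conf}}) < \log|\mathcal{C}|$. Combined with the ambiguous-regime premise $H(C\mid\pi) \approx \log|\mathcal{C}|$, this shows the confidence signal has strictly lower entropy than the judge's near-maximal outcome entropy, which is the assertion.

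I do not expect a genuine technical obstacle; the argument is short once these pieces are assembled. The subtle point worth stating carefully is the logical gap between ``not all equal'' and ``all distinct'': non-uniformity requires only a single pair of unequal probabilities, and the strict monotonicity of the softmax supplies exactly this, so no assumption of pairwise-distinct perplexities is needed. The other items I would flag are the non-degeneracy of $Z$ (finite and positive, which is automatic since $\mathcal{C}$ is finite and the exponential is positive) and the fact that the proposition's ``$\approx$'' in $H(C\mid\pi) \approx \log|\mathcal{C}|$ makes the final comparison heuristic rather than an exact inequality — the rigorous content is entirely the strict bound $H(p_{\text{conf}}) < \log|\mathcal{C}|$.
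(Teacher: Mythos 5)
Your proposal is correct and follows essentially the same route as the paper's own proof: both reduce the claim to the fact that the uniform distribution is the unique entropy maximizer on a finite set, after observing that unequal rationale perplexities force $p_{\text{conf}}$ to be non-uniform via the strict monotonicity of $x \mapsto \exp(-\gamma x)$. The only cosmetic difference is that you derive the sharp bound through the identity $\log|\mathcal{C}| - H(p_{\text{conf}}) = \KL(p_{\text{conf}} \,\|\, u) \ge 0$ with its equality condition, whereas the paper cites strict concavity of entropy directly; your explicit remark that the ambiguous-regime premise $H(C|\pi) \approx \log|\mathcal{C}|$ is heuristic while the strict inequality is the rigorous content matches the paper's treatment.
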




\section{Experiments}
\paragraph{Setup} Our dataset combines both the 80 questions from MT-Bench~\cite{mtbench} and the 500 challenges from ArenaHard~\cite{arenahard}. MT-Bench provides broad coverage of diverse instructions across eight categories including writing, roleplay, and reasoning, while ArenaHard offers particularly challenging queries drawn from real-world user interactions. For each question, we sample candidate responses from diverse LLMs. Under the single-comparison inconsistency protocol, we construct a dataset of 10.8k instances, where each instance corresponds to a pair of responses annotated with their single scores and the induced pairwise preference. Under the pairwise transitivity inconsistency protocol, we collect 43.2k pairwise relations for $k=4$ and 50.4k for $k=5$, each derived from the corresponding $k$-response sets to evaluate transitivity. For each question, we collected responses from a diverse set of large language models with varying capabilities. All gold-standard scores and pairwise comparison results for these responses are verified through human review. The final dataset is carefully balanced, ensuring uniform score distributions across every rating level for both single-score and pairwise-comparison scenarios. The detailed model information and inference hyperparameters used in our test data creation are listed in Appendix~\ref{app-inference}, and the detailed category distribution across our datasets is provided in Appendix~\ref{app:category-generalization}. Beyond the core framework, we also extend our approach to multi-dimensional evaluation, as detailed in Appendix~\ref{app:multi-dim}.

We evaluate both the inconsistencies and accuracies. For inconsistencies, we use: (1) the \textbf{Conflict Ratio} (CR, Definition~\ref{def-single}) and (2) the \textbf{Non-Transitivity Ratio} (NTR, Definition~\ref{def-pairwise}). For accuracies, we employ (1) \textbf{Win Rate} for single-score evaluation: the fraction of instances whose score sides with the reference, which highlights protocol differences more sharply than MSE or MAE. (2) \textbf{Exact Match} for pairwise comparison: given the ground-truth order $A \succ B$, only a verbatim output of $A \succ B$ counts; any reversal or tie is wrong---an all-or-nothing metric. Parameter~$k$ (Def.~\ref{def-pairwise}) governs the subset size for pairwise transitivity checks.  
Larger~$k$ captures higher-order cycles at cost~$\binom{n}{k}$; $k\!=\!3$ yields too few triples to discriminate models, so we report $k\!=\!4,5$.

\paragraph{Baselines} We establish two fundamental baseline approaches for comparison with TrustJudge. For single-score evaluation, we implement: (1) the standard raw scoring method that directly outputs absolute scores (1-5 scale), as used in MT-Bench; and (2) G-Eval-style probability summation that calculates the total probability mass across possible scores without applying softmax normalization. Following previous work~\cite{pandalm}, \textbf{pairwise baseline mitigates position bias by evaluating each response pair twice (reversed order) and record differing outcomes as ties.} All baselines use the identical judge model and the same detailed prompt (see Appendix~\ref{app-prompt}) as TrustJudge, enabling direct comparison of inconsistency metrics (CR and NTR) and accuracy metrics (Win Rate and Exact Match) under identical conditions.

\paragraph{Evaluated LLMs}
Our experiments comprehensively evaluate TrustJudge across a diverse set of popular LLMs, covering both open-source and proprietary families with varying parameter sizes. Specifically, we include: \textbf{Llama-3.2-3B}, \textbf{Llama-3.1-8B}, and \textbf{Llama-3.1-70B}~\cite{llama3}, three instruction-tuned models from the Llama-3 series, representing small, medium, and large-scale open-source LLMs, respectively. We also evaluate \textbf{GPT-3.5-Turbo}~\cite{openai_gpt35_2023} and \textbf{GPT-4o}~\cite{openai_gpt4o_2024}, two widely-used proprietary models from OpenAI, included for their strong performance in both general and evaluation-specific benchmarks. Additionally, we assess the \textbf{Qwen2.5-7B}, \textbf{Qwen2.5-14B}, \textbf{Qwen2.5-32B}~\cite{qwen2024technical}, \textbf{Gemma-2-2b}, \textbf{Gemma-2-9B}, and \textbf{Gemma-2-27B}~\cite{gemma} to demonstrate TrustJudge's generalization across model types and sizes. For all evaluations, we use the instruction-tuned or SFT variants of each model, consistent with their intended use as judge LLMs. All judge models are applied with identical evaluation prompts and configurations to ensure fair comparison.


\paragraph{Main Results}

\begin{figure}[htbp]
  \centering

  \begin{subfigure}[t]{.48\linewidth}
    \centering
    \includegraphics[width=\linewidth]{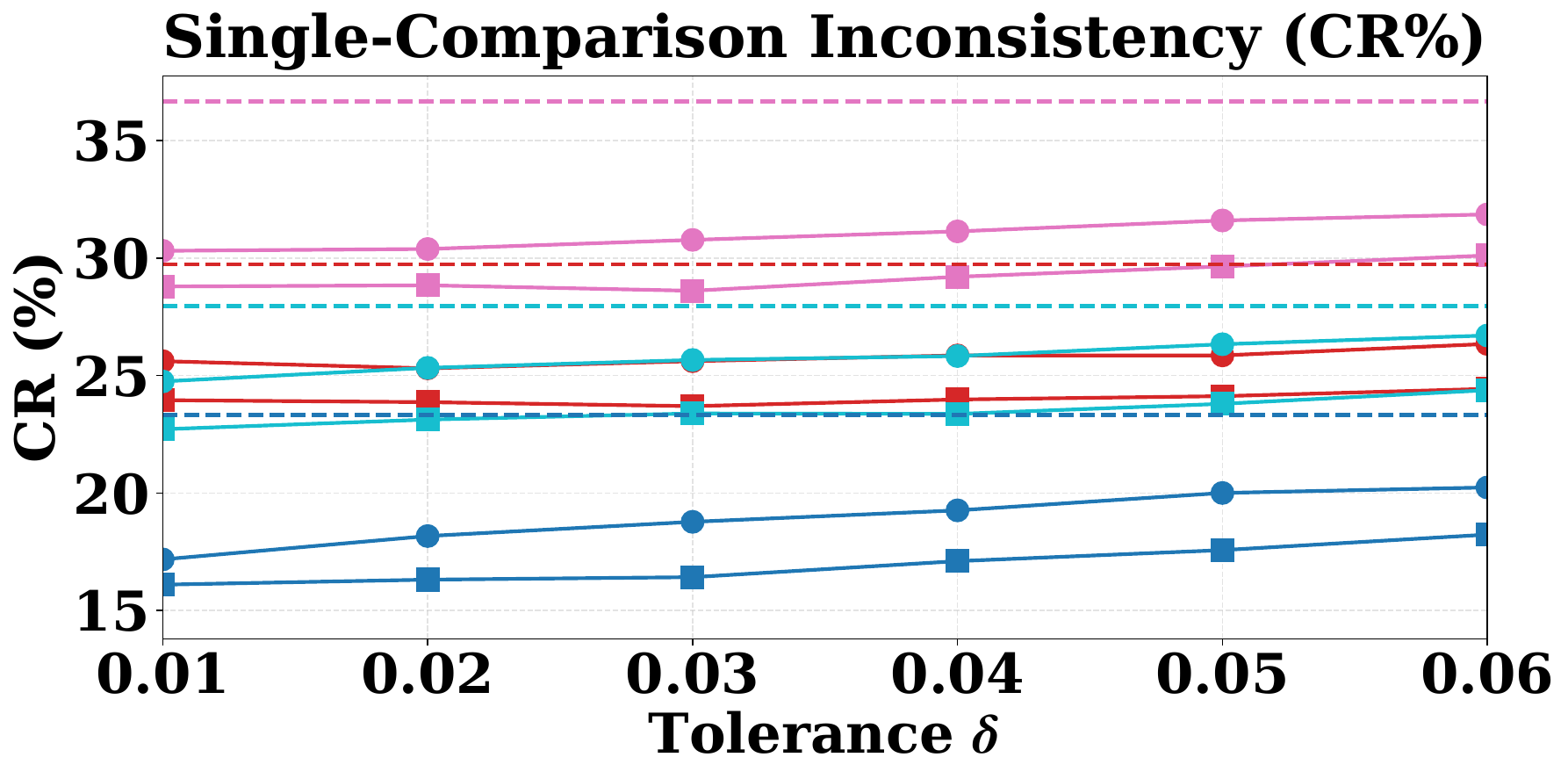}
    \caption{
    \legendCircleLineUnder[1.8em]{0.6pt}{1.0ex}~G\mbox{-}Eval,
    \legendSquareLineUnder[1.8em]{0.6pt}{1.0ex}~Ours (distribution-sensitive scoring),
    \legendDash[1.8em]{0.8pt}~Baseline.
    }

    \label{fig:cr}
  \end{subfigure}\hfill
  \begin{subfigure}[t]{.48\linewidth}
    \centering
    \includegraphics[width=\linewidth]{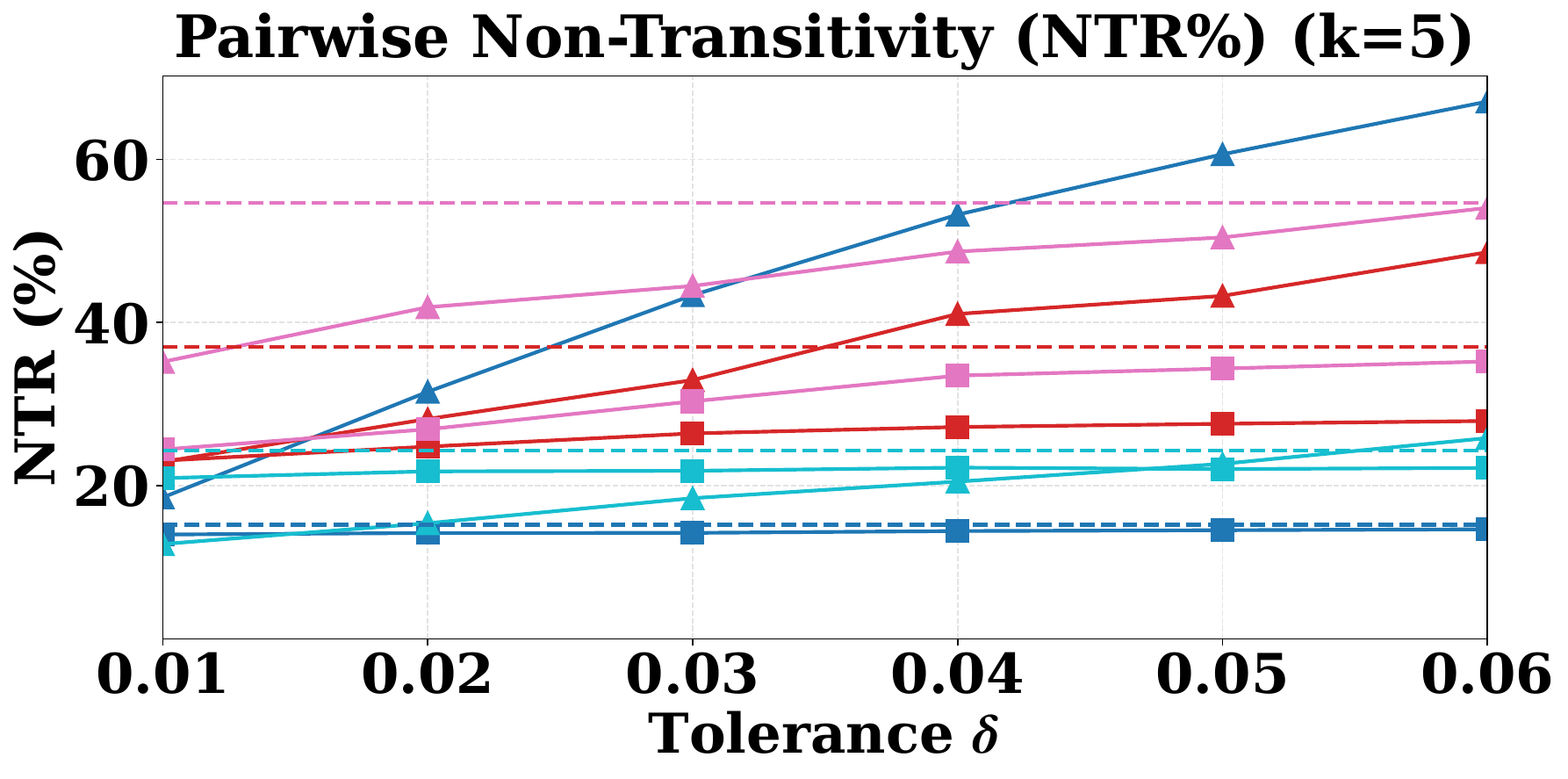}
    \caption{
  \legendTriangleLineUnder[1.8em]{0.6pt}{1.5ex}~Ours (PPL-based method),
  \legendSquareLineUnder[1.8em]{0.6pt}{1.0ex}~Ours (likelihood-aware aggregation),
  \legendDash[1.8em]{0.8pt}~Baseline.
}

    \label{fig:ntr}

  \end{subfigure}

 \caption{Results for single-comparison inconsistency (left) and pairwise transitivity inconsistency (right) across tolerance $\delta$: \textcolor[rgb]{0.118,0.463,0.702}{Llama-3.1-70B} (blue), \textcolor[rgb]{0.835,0.149,0.153}{Llama-3.1-8B} (red), \textcolor[rgb]{0.886,0.463,0.757}{Llama-3.2-3B} (pink), and \textcolor[rgb]{0.086,0.741,0.808}{GPT-4o} (green). 
    Colors correspond to different judge models, while markers distinguish evaluation methods as described in each subfigure. For single-score in the experiment on the left, $\delta$ is a tolerance proportion on the original rating scale; For pairwise (PPL-based) in the experiment on the right, $\delta$ is the threshold on the difference in perplexity between the two presentation orders; For pairwise (Likelihood-aware aggregation) in the experiment on the right, $\delta$ is the threshold on the confidence gap between the top two aggregated outcomes.}

  \label{fig:cr_ntr}
\end{figure}

The experimental results comparing TrustJudge with baseline approaches across multiple model families and sizes are summarized in Table~\ref{tab:main_results} and Figure~\ref{fig:cr_ntr}. The key findings are:

\begin{table}[ht!]
  \centering
  \caption{Results for two experiments: (1) Score‐Comparison Inconsistency (CR) comparing raw‐score baseline, G‐Eval probability‐summation, and TrustJudge’s distribution-sensitive scoring; (2) Pairwise Transitivity Inconsistency (NTR$_{k=4,5}$) comparing two-pass swap-order baseline versus TrustJudge’s likelihood-aware aggregation. Win rate quantifies scoring precision by measuring the proportion of test instances where a method’s score is nearest the ground truth with results presented on both 5-point and 100-point scales. Exact match quantifies comparison consistency by measuring the proportion of pairwise method outcomes that perfectly align with dataset annotations.}
  \label{tab:main_results}
  \renewcommand{\arraystretch}{1.2}
  \setlength{\tabcolsep}{4pt}
  \resizebox{\textwidth}{!}{%
    \begin{tabular}{@{}l ccc cc cc cc cc cc cc cc@{}}
      \toprule
      \multirow{2}{*}{\textbf{Model}}
        & \multicolumn{3}{c}{\textbf{CR (\%)}}
        & \multicolumn{2}{c}{\textbf{NTR$_{k=4}$ (\%)}}
        & \multicolumn{2}{c}{\textbf{NTR$_{k=5}$ (\%)}}
        & \multicolumn{2}{c}{\textbf{Ours vs Baseline}}
        & \multicolumn{2}{c}{\textbf{Ours vs G-Eval}}
        & \multicolumn{2}{c}{\textbf{Pairwise Exact Match}} \\
      \cmidrule(lr){2-4} \cmidrule(lr){5-6} \cmidrule(lr){7-8} \cmidrule(lr){9-10} \cmidrule(lr){11-12} \cmidrule(lr){13-14}
        & Baseline & G-Eval & Ours 
        & Baseline & Ours 
        & Baseline & Ours 
        & 5-scale & 100-scale 
        & 5-scale & 100-scale 
        & Baseline & TrustJudge \\
      \midrule
      Llama-3.2-3B-Instruct  
        & 36.65   & 29.50 & \bfseries 29.15  
        & 32.42   & \bfseries  8.07  
        & 54.69   & \bfseries 17.76 
        & 45.41   & \bfseries 54.66  
        & \bfseries 62.21   & \bfseries 51.03  
        & 72.06   & \bfseries 78.91  \\
      Llama-3.1-8B-Instruct  
        & 29.73   &  25.31 & \bfseries 23.75  
        & 20.26   & \bfseries  3.79  
        & 37.03   & \bfseries  8.46 
        & \bfseries 56.84   & \bfseries 51.88  
        & \bfseries 59.61   & \bfseries 51.24  
        & 75.67   & \bfseries 81.68  \\
      Llama-3.1-70B-Instruct 
        & 23.32   & 15.77           & \bfseries 14.89
        &  7.23   & \bfseries  1.94  
        & 15.22   & \bfseries  4.40 
        & \bfseries 51.77   & \bfseries 54.53  
        & \bfseries 64.22   & \bfseries 55.27  
        & 80.42   & \bfseries 81.61  \\
      GPT-4o                 
        & 27.95   & 23.18           & \bfseries 22.60
        & 11.70   & \bfseries  2.83  
        & 24.33   & \bfseries  6.01 
        & \bfseries 50.31   & \bfseries 55.60  
        & \bfseries 65.11   & \bfseries 53.43 
        & 78.67  & \bfseries 81.51 \\
      \bottomrule
    \end{tabular}%

  }
\end{table}

 \textbf{TrustJudge significantly reduces evaluation inconsistencies across all model sizes.} Our experiments demonstrate that TrustJudge achieves superior consistency compared to both direct scoring baselines and G-Eval approaches. The proposed method achieves substantial reductions in Conflict Ratios, delivering absolute improvements of 4.78\%–8.43\% over the baseline approaches. Moreover, Trustjudge consistently surpasses G-eval by approximately 1–2\% across every experimental setting. More importantly, TrustJudge dramatically lowers transitivity violations in pairwise comparisons, with NTR$_{k=5}$ violations reduced by 10.82\%-36.93\% absolute. For instance, Llama-3.2-3B shows the most substantial improvement, decreasing NTR$_{k=5}$ from 54.69\% to just 17.76\% with TrustJudge. These consistency improvements are particularly notable because they are achieved without requiring additional training or fine-tuning of the base models.
    
 \textbf{TrustJudge maintains and often improves evaluation accuracy while reducing inconsistencies.} TrustJudge demonstrates that both consistency and accuracy can be achieved simultaneously. TrustJudge improves exact match rates by 1.19\%-6.85\% across different model sizes compared to baseline approaches, with the most significant gains observed for smaller models (6.85\% improvement for Llama-3.2-3B). In pairwise evaluations, TrustJudge achieves win rates of 45.41\%-65.11\% against both baseline methods and G-Eval approaches. The method performs particularly well on fine-grained 100-point scoring and maintains strong performance on 5-point scales. This accuracy preservation is crucial for practical applications where both reliable and precise evaluations are required.

\begin{wraptable}{r}{0.5\textwidth}
  \centering
  \caption{Ablation study where "L" refers to LLaMA and "G" to GPT. Single Score Components report CR and Pairwise Comparison Components report NTR$_{k=4}$.}
  \label{tab:ablation_study}
  \begin{adjustbox}{width=0.5\textwidth}
  \sisetup{table-format=2.2}
  \begin{tabular}{@{} l *{4}{S} @{}}
    \toprule
    \textbf{Components}
      & {L-3.1-8B}
      & {L-3.1-70B}
      & {G-3.5-Turbo}
      & {G-4o} \\
    \midrule
    \multicolumn{5}{@{}l}{\textit{Single Score Components}} \\
    5-scale Baseline       & 29.73 & 23.32 & 24.35 & 27.95 \\
    + Softmax     & 26.10 & \bfseries 17.08 & 24.03 & 25.50 \\
    + 100-scale  & \bfseries 24.54 & 17.94 & \bfseries 22.10 & \bfseries 24.01 \\
    \midrule
    \multicolumn{5}{@{}l}{\textit{Pairwise Comparison Components}} \\
    Baseline                  & 20.26 &  7.23 & 14.01 & 11.70 \\
    + Likelihood  & \bfseries 3.79 & \bfseries 1.94 &  6.26 & \bfseries 2.83 \\
    + PPL-Based      &  6.56 &  2.18 & \bfseries 4.80 &  4.48 \\
    \bottomrule
  \end{tabular}
  \end{adjustbox}
\end{wraptable}

 \textbf{TrustJudge exhibits robust tolerance-aware gains across judge families and evaluation protocols.} A fine-tuned tolerance ($\delta$) often yields superior outcomes, as a smaller tolerance reduces ambiguity. Conversely, a larger tolerance introduces greater uncertainty. It's important to note that even with a tolerance of zero, ties can still occur. The TrustJudge scoring and aggregation method effectively mitigates inconsistencies. Notably, its benefits are evident across various tolerance settings, demonstrating its robustness and effectiveness.


\paragraph{Ablation Study}

To evaluate the contribution of different components in TrustJudge, we conduct an ablation study by systematically removing key elements: (1) the softmax normalization, (2) the 100-point granularity enhancement for single-score evaluation, and (3) the pairwise comparison strategies (likelihood-aware aggregation and PPL-based methods). We also examine performance variations across different judge LLMs to demonstrate TrustJudge's model-agnostic properties.


Table~\ref{tab:ablation_study} reveals several key findings. For single score components, the 5-scale baseline shows the highest inconsistency rates across all models (39.73\% for Llama-3.1-8B, 27.5\% for GPT-4o), indicating the importance of TrustJudge's enhancements. Adding softmax normalization reduces inconsistency by 0.32\%-6.24\% absolute across models, while incorporating 100-scale granularity yields improvements (up to 5.19\% reduction from 5-scale).

\begin{wrapfigure}{r}{0.5\textwidth} 
  \centering
  \includegraphics[width=0.5\textwidth]{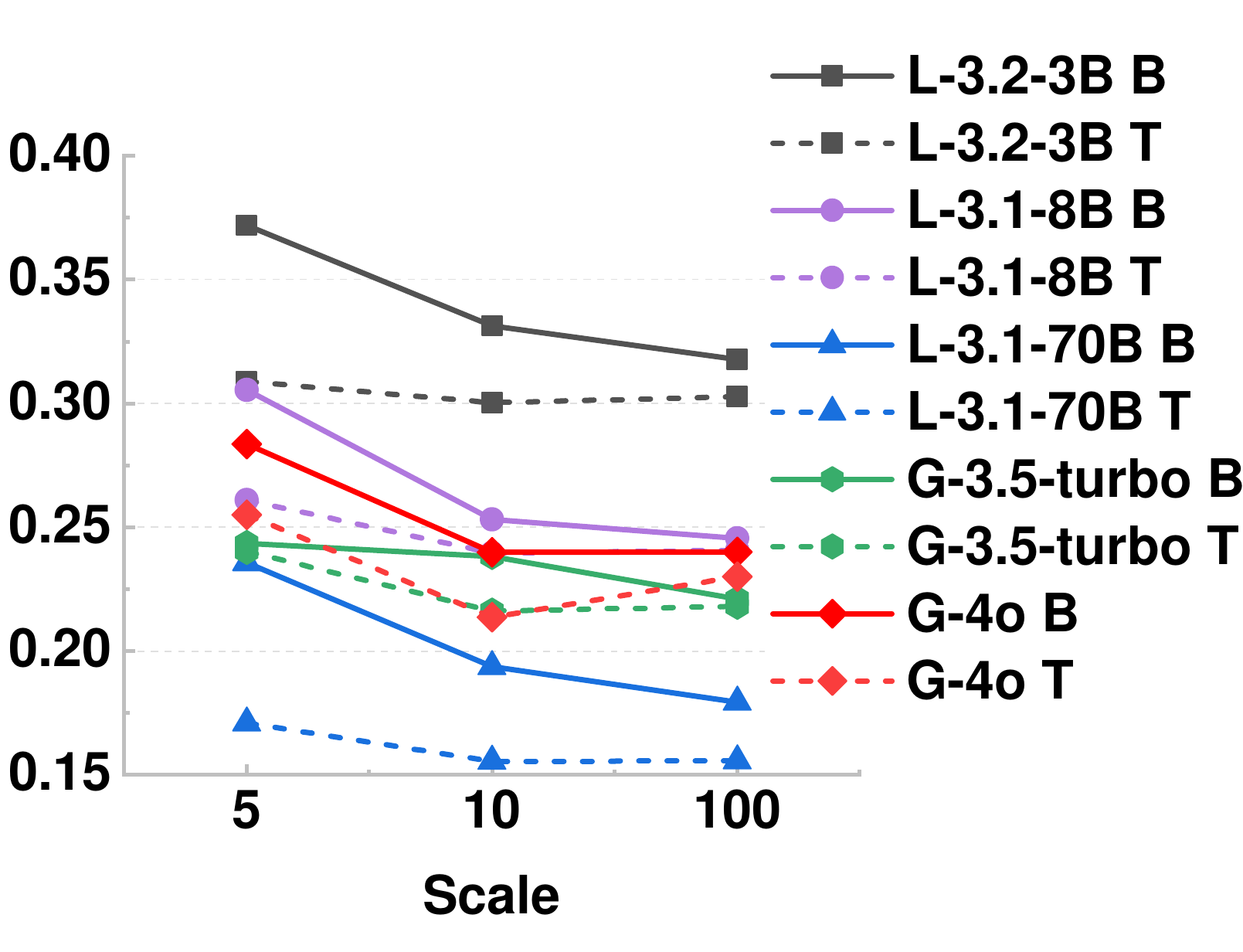}
  \caption{
Effect of scoring granularity on \emph{Conflict Ratio} across judge models.
We measure the Conflict Ratio (CR) under three scoring scales: 5-point, 10-point, and 100-point. ``L'' refers to LLaMA models and ``G'' to GPT models; ``B'' denotes baseline scoring, while ``T'' represents TrustJudge.
}
\label{fig:score-evaluation}
\end{wrapfigure}

In pairwise comparison components, the baseline shows moderate performance (20.26\% inconsistency for Llama-3.1-8B). The likelihood-aware aggregation strategy achieves the best results overall, reducing inconsistency to as low as 1.94\% for Llama-3.1-70B and 2.83\% for GPT-4o. The PPL-based comparison shows substantial gains over baseline (16.47\% absolute improvement for Llama-3.1-8B) while offering practical advantages in implementation, as it operates directly on sequence probabilities without requiring explicit win/tie/lose position identification .

The consistent performance patterns across model architectures (from 8B to 70B parameters) demonstrate that TrustJudge's benefits are not model-specific but derive from its methodological innovations. Larger models generally achieve better absolute performance, with Llama-3.1-70B and GPT-4o showing particularly strong results when using TrustJudge.

\paragraph{Increasing Score granularity reduces inconsistency.}
As shown in Figure~\ref{fig:score-evaluation}, increasing the scoring scale from 5 to 100 points consistently reduces the Conflict Ratios. Furthermore, TrustJudge (T) achieves lower inconsistency than the baseline (B) under all granularities, demonstrating its effectiveness in preserving scoring fidelity. The benefit is especially pronounced for larger models such as Llama-3.1-70B and GPT-4o.

\paragraph{Generalization Experiment}

To systematically validate TrustJudge's cross-architectural adaptability and practical value for alignment training, we evaluate the framework across 12 model variants spanning four major architectures (Qwen, Gemma, Llama, GPT) with various parameter sizes. The experiments cover both single-response scoring and pairwise comparison scenarios. Note that we set $k=4$ for pairwise comparison.

\begin{figure}[ht]
  \centering
  \includegraphics[width=0.45\linewidth]{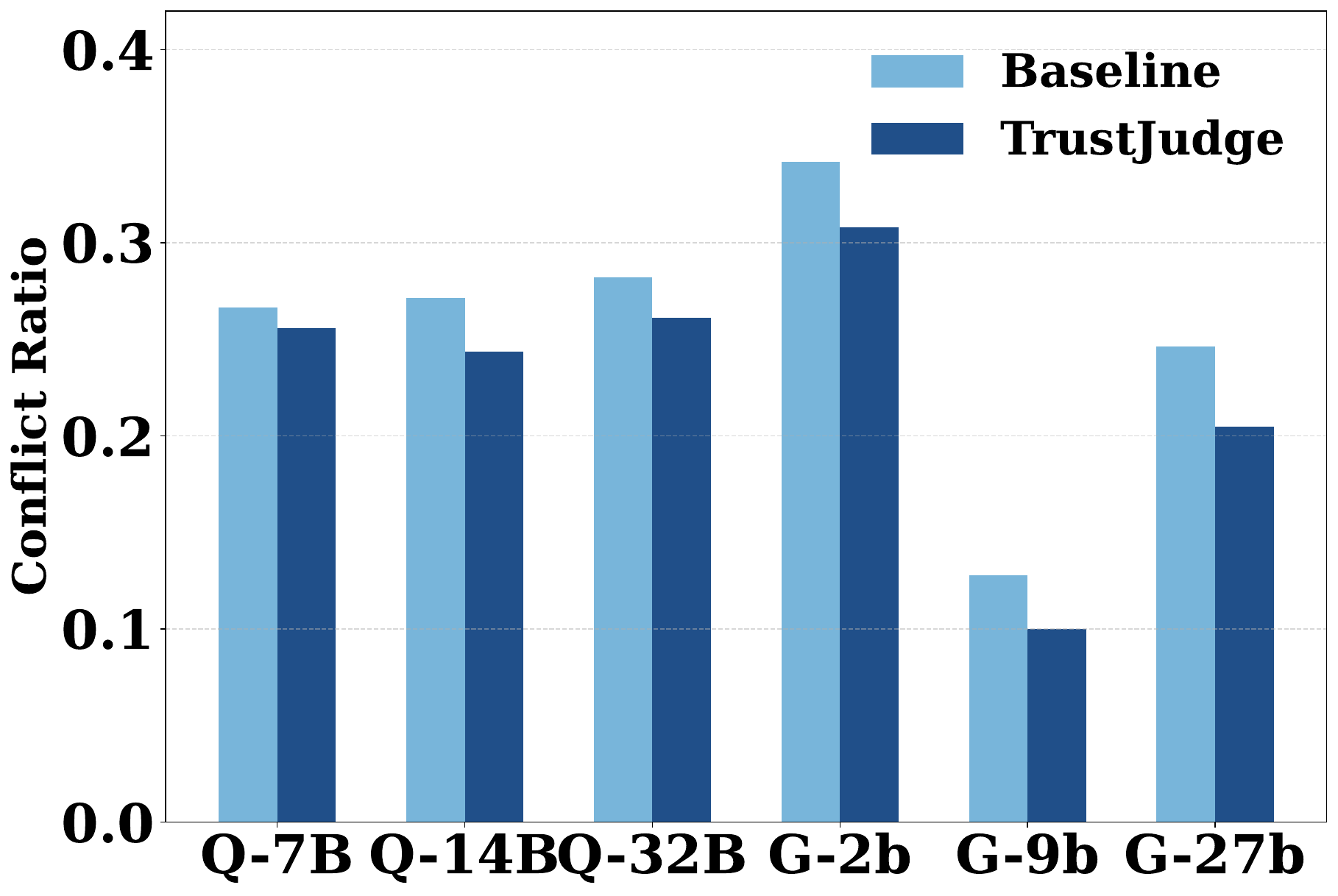}
  \includegraphics[width=0.45\linewidth]{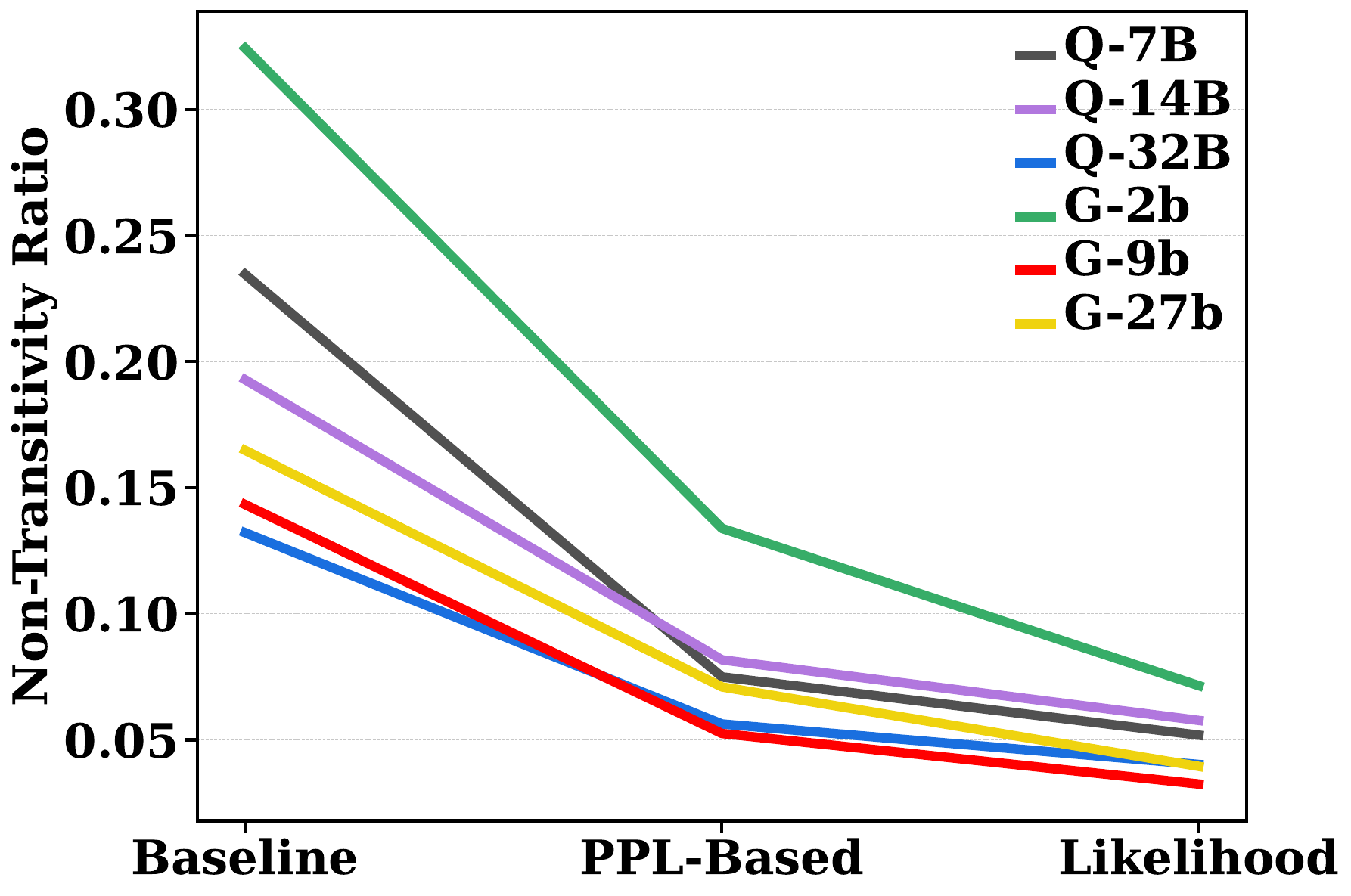}
   \caption{Performance of TrustJuduge with LLMs of different Sizes and Structures. Note that Qwen-2.5 is denoted as Q and Gemma-2 as G.}
  \label{fig:model-generalization}
\end{figure}

Figure~\ref{fig:model-generalization} demonstrates three key findings through comprehensive architectural comparisons:
\textit{Architecture-agnostic consistency improvement}. The distribution-sensitive scoring achieves consistent reductions in single-instance conflict ratios across all tested architectures. Moreover, inconsistency varies markedly across architectures: Gemma consistently outperforms Qwen of comparable size.

\textit{Transitivity violation reversal}. The proposed likelihood-aware aggregation strategy substantially mitigates non-transitivity patterns across model variants. Remarkably, this approach enables mid-sized models to surpass the transitivity performance of significantly larger baseline models under controlled evaluation settings.
    
 \textit{Size-performance decoupling}. While model capacity naturally correlates with lower inconsistency rates, TrustJudge effectively narrows the performance disparity between small and large models. This capability highlights the framework's potential to enhance the practical utility of resource-efficient models for alignment tasks. Notably, bigger is not always better: the 9B Gemma actually exhibits lower inconsistency than its 27B sibling.


\begin{table}[ht]
  \centering
  \caption{Performance of TrustJuduge for Llama-3.1-8B and DeepSeek-R1-Distill-Llama-8B.}
  \label{tab:deepseek}
  \setlength{\tabcolsep}{3pt}
  \begin{adjustbox}{max width=0.9\linewidth}
    \begin{tabular}{lccccccccc}
      \toprule
      & \multicolumn{3}{c}{CR (\%)} & \multicolumn{3}{c}{$\mathrm{NTR}_{k=4}$ (\%)} & \multicolumn{3}{c}{$\mathrm{NTR}_{k=5}$ (\%)} \\
      \cmidrule(lr){2-4}\cmidrule(lr){5-7}\cmidrule(lr){8-10}
      \textbf{Model} & Baseline & G\,-Eval & Ours & Baseline & Likelihood & PPL-Based & Baseline & Likelihood & PPL-Based \\
      \midrule
      Llama-3.1-8B                 & 29.73 & 25.31 & \bfseries 23.75 & 20.26 &  \bfseries 3.79 &  6.80 & 37.03 &  \bfseries 8.46 & 16.20 \\
      DeepSeek-R1-Distill-Llama-8B & 58.75 & 53.63 & \bfseries 49.28 & 44.61 & \bfseries 11.43 & 25.16 & 63.98 & \bfseries 18.50 & 41.78 \\
      \bottomrule
    \end{tabular}
  \end{adjustbox}
\end{table}

\paragraph{Reasoning model results} As shown in Table~\ref{tab:deepseek}, The reasoning model's significantly higher inconsistency rates suggest a potential catastrophic forgetting of judge capabilities due to reinforcement training on mathematical data~\cite{guo2025deepseek}. This finding is noteworthy as it highlights the challenges that arise when models are trained on specialized tasks, such as mathematical reasoning, which can inadvertently lead to the degradation of their performance in other critical areas like judging. Despite this, TrustJudge remains effective in improving judge performance, demonstrating its robustness and adaptability in enhancing the model's capabilities across different domains.

\paragraph{Using Trustjudge for Rewarding Models}

\begin{wraptable}{r}{0.5\textwidth}
  \centering
  \caption{DPO results on Llama-3.1-8B and Qwen2.5-7B. The experimental setup for DPO is provided in Appendix~\ref{app-dpo-training}.
}
  \label{tab:dpo_results}
  \begin{adjustbox}{width=0.5\textwidth}
  \begin{tabular}{@{}lcc@{}}
    \toprule
    Data Selection Strategy                         & LC Win Rate & Win Rate \\
    \midrule
    Llama-3.1-8B-SFT                & 11.17       & 7.95      \\
    Llama-3.1-8B-SFT-5-Scale-Baseline     & 19.13       & 20.93     \\
    Llama-3.1-8B-SFT-100-Scale-Softmax      & \textbf{20.52} & \textbf{24.16} \\
    Qwen2.5-7B-SFT                  & 11.92       & 8.07      \\
    Qwen2.5-7B-SFT-5-Scale-Baseline       & 16.82       & 15.09     \\
    Qwen2.5-7B-SFT-100-Scale-Softmax        & \textbf{18.54} & \textbf{18.76} \\
    \bottomrule
  \end{tabular}
  \end{adjustbox}
\end{wraptable}

Table~\ref{tab:dpo_results} shows TrustJudge's DPO enhancement. We trained SFT models on sampled 6K IFT/EFT examples (Open Assistant~\cite{openassistant} + UltraFeedback~\cite{UltraFeedback}), then performed DPO on 5K questions from the same sources. Diverse LLMs answered these questions before preference judgments. AlpacaEval2 (GPT-4o judge) shows TrustJudge's 100-point scoring improves win rates by 16.21\% (Llama-3-8B) and 1.94\% 10.69\% (Qwen2.5-7B) over 5-point baselines, measured across 805 questions with both standard and LC win rates. This confirms TrustJudge's dual utility for evaluation and preference optimization.

The results establish TrustJudge's robust generalizability across: (1) different model families and scales, maintaining consistent inconsistency reduction regardless of architecture; and (2) diverse applications including direct evaluation and reward modeling for DPO training. This versatility stems from TrustJudge's architecture-agnostic probabilistic design and fine-grained scoring approach.

\section{Conclusion}

We presented TrustJudge, a novel probabilistic evaluation framework designed to address fundamental inconsistencies in current LLM-as-a-judge paradigms. Through systematic analysis, we identified two critical issues: Score-Comparison Inconsistency due to information loss in discrete scoring systems, and Pairwise Transitivity Inconsistency stemming from ambiguous tie judgments. TrustJudge introduces distribution-sensitive probabilistic scoring, preserving judgment entropy, and likelihood-aware aggregation strategies to effectively mitigate these inconsistencies. 

Empirical results demonstrate that TrustJudge significantly reduces Score-Comparison inconsistency and Pairwise Transitivity inconsistency across various LLM architectures and scales. Crucially, these improvements do not compromise evaluation accuracy, achieving enhancements in exact match rates and win rates compared to established baselines. Our ablation and generalization studies confirm the robustness and model-agnostic applicability of TrustJudge. TrustJudge offers both theoretical insights and practical solutions for enhancing the reliability and credibility of automated LLM evaluations, contributing towards more trustworthy and consistent use of large language models in research and applications. We also discuss the limitations of our approach in Appendix~\ref{app:limitations}. 

\bibliography{main}

\begin{thebibliography}{53}
\providecommand{\natexlab}[1]{#1}
\providecommand{\url}[1]{\texttt{#1}}
\expandafter\ifx\csname urlstyle\endcsname\relax
  \providecommand{\doi}[1]{doi: #1}\else
  \providecommand{\doi}{doi: \begingroup \urlstyle{rm}\Url}\fi

\bibitem[{Anthropic}(2024)]{anthropic2024claude3}
{Anthropic}.
\newblock The claude 3 model family: Opus, sonnet, haiku — model card.
\newblock \url{https://www.anthropic.com/claude-3-model-card}, 2024.

\bibitem[Bai et~al.(2023)Bai, Ying, Cao, Lv, He, Wang, Yu, Zeng, Xiao, Lyu, et~al.]{bai2023benchmarking}
Yushi Bai, Jiahao Ying, Yixin Cao, Xin Lv, Yuze He, Xiaozhi Wang, Jifan Yu, Kaisheng Zeng, Yijia Xiao, Haozhe Lyu, et~al.
\newblock Benchmarking foundation models with language-model-as-an-examiner.
\newblock \emph{Advances in Neural Information Processing Systems}, 36:\penalty0 78142--78167, 2023.

\bibitem[Chang et~al.(2024)Chang, Wang, Wang, Wu, Yang, Zhu, Chen, Yi, Wang, Wang, et~al.]{chang2024survey}
Yupeng Chang, Xu~Wang, Jindong Wang, Yuan Wu, Linyi Yang, Kaijie Zhu, Hao Chen, Xiaoyuan Yi, Cunxiang Wang, Yidong Wang, et~al.
\newblock A survey on evaluation of large language models.
\newblock \emph{ACM Transactions on Intelligent Systems and Technology}, 15\penalty0 (3):\penalty0 1--45, 2024.

\bibitem[Chavinlo(2023)]{chavinlo2023alpaca13b}
Chavinlo.
\newblock alpaca-13b (huggingface model card).
\newblock \url{https://huggingface.co/chavinlo/alpaca-13b}, 2023.

\bibitem[Chen et~al.(2024{\natexlab{a}})Chen, Chen, Zhang, Wang, Liu, Zhou, Zhang, Wan, Zhou, and Sun]{chen2024mllm}
Dongping Chen, Ruoxi Chen, Shilin Zhang, Yaochen Wang, Yinuo Liu, Huichi Zhou, Qihui Zhang, Yao Wan, Pan Zhou, and Lichao Sun.
\newblock Mllm-as-a-judge: Assessing multimodal llm-as-a-judge with vision-language benchmark.
\newblock In \emph{Forty-first International Conference on Machine Learning}, 2024{\natexlab{a}}.

\bibitem[Chen et~al.(2024{\natexlab{b}})Chen, Chen, Liu, Jiang, and Wang]{chen2024humans}
Guiming~Hardy Chen, Shunian Chen, Ziche Liu, Feng Jiang, and Benyou Wang.
\newblock Humans or llms as the judge? a study on judgement biases.
\newblock \emph{arXiv preprint arXiv:2402.10669}, 2024{\natexlab{b}}.

\bibitem[Chiang et~al.(2023)Chiang, Zheng, Sheng, and et~al.]{chiang2023vicuna}
Wei-Lin Chiang, Lianmin Zheng, Ying Sheng, and et~al.
\newblock Vicuna: An open-source chatbot impressing gpt-4 with 90\% chatgpt quality.
\newblock \emph{arXiv:2304.01196}, 2023.
\newblock URL \url{https://arxiv.org/pdf/2304.01196}.

\bibitem[Cui et~al.(2024)Cui, Yuan, Ding, Yao, He, et~al.]{UltraFeedback}
Ganqu Cui, Lifan Yuan, Ning Ding, Guanming Yao, Bingxiang He, et~al.
\newblock {ULTRAFEEDBACK}: Boosting language models with scaled {AI} feedback.
\newblock In \emph{Proceedings of the 41st International Conference on Machine Learning}, volume 235 of \emph{Proceedings of Machine Learning Research}, pp.\  9722--9744. PMLR, 21--27 Jul 2024.

\bibitem[Dubois et~al.(2024)Dubois, Galambosi, Liang, and Hashimoto]{alpacaeval}
Yann Dubois, Bal{\'a}zs Galambosi, Percy Liang, and Tatsunori~B Hashimoto.
\newblock Length-controlled alpacaeval: A simple way to debias automatic evaluators.
\newblock \emph{arXiv preprint arXiv:2404.04475}, 2024.

\bibitem[Fernandes et~al.(2023)Fernandes, Deutsch, Finkelstein, Riley, Martins, Neubig, Garg, Clark, Freitag, and Firat]{transeval}
Patrick Fernandes, Daniel Deutsch, Mara Finkelstein, Parker Riley, Andr{\'e}~FT Martins, Graham Neubig, Ankush Garg, Jonathan~H Clark, Markus Freitag, and Orhan Firat.
\newblock The devil is in the errors: Leveraging large language models for fine-grained machine translation evaluation.
\newblock \emph{arXiv preprint arXiv:2308.07286}, 2023.

\bibitem[Frick et~al.(2024)Frick, Jin, Li, Ganesan, Zhang, Jiao, and Zhu]{nexusflow2024athene70b}
Evan Frick, Peter Jin, Tianle Li, Karthik Ganesan, Jian Zhang, Jiantao Jiao, and Banghua Zhu.
\newblock Athene-70b: Redefining the boundaries of post-training for open models.
\newblock \url{https://nexusflow.ai/blogs/athene}, 2024.
\newblock See also HF model card: Nexusflow/Athene-70B.

\bibitem[Fu et~al.(2023)Fu, Ng, Jiang, and Liu]{fu2023gptscore}
Jinlan Fu, See-Kiong Ng, Zhengbao Jiang, and Pengfei Liu.
\newblock Gptscore: Evaluate as you desire.
\newblock \emph{arXiv preprint arXiv:2302.04166}, 2023.

\bibitem[Grattafiori et~al.(2024)Grattafiori, Dubey, Jauhri, Pandey, Kadian, Al-Dahle, Letman, Mathur, Schelten, Vaughan, et~al.]{llama3}
Aaron Grattafiori, Abhimanyu Dubey, Abhinav Jauhri, Abhinav Pandey, Abhishek Kadian, Ahmad Al-Dahle, Aiesha Letman, Akhil Mathur, Alan Schelten, Alex Vaughan, et~al.
\newblock The llama 3 herd of models.
\newblock \emph{arXiv preprint arXiv:2407.21783}, 2024.

\bibitem[Guo et~al.(2025)Guo, Yang, Zhang, Song, Zhang, Xu, Zhu, Ma, Wang, Bi, et~al.]{guo2025deepseek}
Daya Guo, Dejian Yang, Haowei Zhang, Junxiao Song, Ruoyu Zhang, Runxin Xu, Qihao Zhu, Shirong Ma, Peiyi Wang, Xiao Bi, et~al.
\newblock Deepseek-r1: Incentivizing reasoning capability in llms via reinforcement learning.
\newblock \emph{arXiv preprint arXiv:2501.12948}, 2025.

\bibitem[Huang et~al.(2024)Huang, Qu, Bu, Zhou, Liu, Yang, Xu, and Zhao]{huang2024empirical}
Hui Huang, Yingqi Qu, Xingyuan Bu, Hongli Zhou, Jing Liu, Muyun Yang, Bing Xu, and Tiejun Zhao.
\newblock An empirical study of llm-as-a-judge for llm evaluation: Fine-tuned judge model is not a general substitute for gpt-4.
\newblock \emph{arXiv preprint arXiv:2403.02839}, 2024.

\bibitem[K\"{o}pf et~al.(2023)K\"{o}pf, Kilcher, von R\"{u}tte, Anagnostidis, et~al.]{openassistant}
Andreas K\"{o}pf, Yannic Kilcher, Dimitri von R\"{u}tte, Sotiris Anagnostidis, et~al.
\newblock Openassistant conversations - democratizing large language model alignment.
\newblock In A.~Oh, T.~Naumann, A.~Globerson, K.~Saenko, M.~Hardt, and S.~Levine (eds.), \emph{Advances in Neural Information Processing Systems}, volume~36, pp.\  47669--47681. Curran Associates, Inc., 2023.
\newblock URL \url{https://proceedings.neurips.cc/paper_files/paper/2023/file/949f0f8f32267d297c2d4e3ee10a2e7e-Paper-Datasets_and_Benchmarks.pdf}.

\bibitem[Koutcheme et~al.(2024)Koutcheme, Dainese, Sarsa, Hellas, Leinonen, and Denny]{koutcheme2024open}
Charles Koutcheme, Nicola Dainese, Sami Sarsa, Arto Hellas, Juho Leinonen, and Paul Denny.
\newblock Open source language models can provide feedback: Evaluating llms' ability to help students using gpt-4-as-a-judge.
\newblock In \emph{Proceedings of the 2024 on Innovation and Technology in Computer Science Education V. 1}, pp.\  52--58. 2024.

\bibitem[Li et~al.(2025)Li, Sun, Huang, Zhong, Jiang, Han, Zhang, Wang, and Liu]{li2025preference}
Dawei Li, Renliang Sun, Yue Huang, Ming Zhong, Bohan Jiang, Jiawei Han, Xiangliang Zhang, Wei Wang, and Huan Liu.
\newblock Preference leakage: A contamination problem in llm-as-a-judge.
\newblock \emph{arXiv preprint arXiv:2502.01534}, 2025.

\bibitem[Li et~al.()Li, Sun, Yuan, Fan, Liu, et~al.]{autoj}
Junlong Li, Shichao Sun, Weizhe Yuan, Run-Ze Fan, Pengfei Liu, et~al.
\newblock Generative judge for evaluating alignment.
\newblock In \emph{The Twelfth International Conference on Learning Representations}.

\bibitem[Li et~al.(2023)Li, Sun, Yuan, Fan, Zhao, and Liu]{li2023generative}
Junlong Li, Shichao Sun, Weizhe Yuan, Run-Ze Fan, Hai Zhao, and Pengfei Liu.
\newblock Generative judge for evaluating alignment.
\newblock \emph{arXiv preprint arXiv:2310.05470}, 2023.

\bibitem[Li et~al.(2024)Li, Chiang, Frick, Dunlap, Wu, Zhu, Gonzalez, and Stoica]{arenahard}
Tianle Li, Wei-Lin Chiang, Evan Frick, Lisa Dunlap, Tianhao Wu, Banghua Zhu, Joseph~E Gonzalez, and Ion Stoica.
\newblock From crowdsourced data to high-quality benchmarks: Arena-hard and benchbuilder pipeline.
\newblock \emph{arXiv preprint arXiv:2406.11939}, 2024.

\bibitem[Lin \& Chen(2023)Lin and Chen]{lin2023llm}
Yen-Ting Lin and Yun-Nung Chen.
\newblock Llm-eval: Unified multi-dimensional automatic evaluation for open-domain conversations with large language models.
\newblock \emph{arXiv preprint arXiv:2305.13711}, 2023.

\bibitem[Lin et~al.(2024)Lin, Lin, Xiong, Diao, Liu, Zhang, Pan, Wang, Hu, Zhang, et~al.]{lin2024mitigating}
Yong Lin, Hangyu Lin, Wei Xiong, Shizhe Diao, Jianmeng Liu, Jipeng Zhang, Rui Pan, Haoxiang Wang, Wenbin Hu, Hanning Zhang, et~al.
\newblock Mitigating the alignment tax of rlhf.
\newblock In \emph{Proceedings of the 2024 Conference on Empirical Methods in Natural Language Processing}, pp.\  580--606, 2024.

\bibitem[Liu et~al.(2023)Liu, Iter, Xu, Wang, Xu, and Zhu]{geval}
Yang Liu, Dan Iter, Yichong Xu, Shuohang Wang, Ruochen Xu, and Chenguang Zhu.
\newblock G-eval: Nlg evaluation using gpt-4 with better human alignment.
\newblock In \emph{Proceedings of the 2023 Conference on Empirical Methods in Natural Language Processing}, pp.\  2511--2522, 2023.

\bibitem[{LMSYS}(2023)]{lmsys2023hf7b}
{LMSYS}.
\newblock Vicuna-7b v1.5 (huggingface model card).
\newblock \url{https://huggingface.co/lmsys/vicuna-7b-v1.5}, 2023.

\bibitem[Luo et~al.(2023)Luo, Yang, Meng, Li, Zhou, and Zhang]{luo2023empirical}
Yun Luo, Zhen Yang, Fandong Meng, Yafu Li, Jie Zhou, and Yue Zhang.
\newblock An empirical study of catastrophic forgetting in large language models during continual fine-tuning.
\newblock \emph{arXiv preprint arXiv:2308.08747}, 2023.

\bibitem[OpenAI(2023)]{openai_gpt35_2023}
OpenAI.
\newblock Gpt-3.5-turbo.
\newblock \url{https://platform.openai.com/docs/models/gpt-3-5}, 2023.

\bibitem[OpenAI(2024)]{openai_gpt4o_2024}
OpenAI.
\newblock Gpt-4o.
\newblock \url{https://openai.com/index/gpt-4o}, 2024.

\bibitem[Que et~al.(2024)Que, Duan, He, Mou, Zhou, Liu, Rong, Wang, Yang, Zhang, et~al.]{que2024hellobench}
Haoran Que, Feiyu Duan, Liqun He, Yutao Mou, Wangchunshu Zhou, Jiaheng Liu, Wenge Rong, Zekun~Moore Wang, Jian Yang, Ge~Zhang, et~al.
\newblock Hellobench: Evaluating long text generation capabilities of large language models.
\newblock \emph{arXiv preprint arXiv:2409.16191}, 2024.

\bibitem[Raju et~al.(2024)Raju, Jain, Li, Li, and Thakker]{raju2024constructing}
Ravi Raju, Swayambhoo Jain, Bo~Li, Jonathan Li, and Urmish Thakker.
\newblock Constructing domain-specific evaluation sets for llm-as-a-judge.
\newblock \emph{arXiv preprint arXiv:2408.08808}, 2024.

\bibitem[Riviere et~al.(2024)Riviere, Pathak, Sessa, Hardin, Bhupatiraju, Hussenot, Mesnard, Shahriari, Ram{\'e}, et~al.]{gemma}
Morgane Riviere, Shreya Pathak, Pier~Giuseppe Sessa, Cassidy Hardin, Surya Bhupatiraju, L{\'e}onard Hussenot, Thomas Mesnard, Bobak Shahriari, Alexandre Ram{\'e}, et~al.
\newblock Gemma 2: Improving open language models at a practical size.
\newblock \emph{arXiv preprint arXiv:2408.00118}, 2024.

\bibitem[Saha et~al.(2023)Saha, Levy, Celikyilmaz, Bansal, Weston, and Li]{saha2023branch}
Swarnadeep Saha, Omer Levy, Asli Celikyilmaz, Mohit Bansal, Jason Weston, and Xian Li.
\newblock Branch-solve-merge improves large language model evaluation and generation.
\newblock \emph{arXiv preprint arXiv:2310.15123}, 2023.

\bibitem[Song et~al.(2024)Song, Su, Shalyminov, Cai, and Mansour]{song-etal-2024-finesure}
Hwanjun Song, Hang Su, Igor Shalyminov, Jason Cai, and Saab Mansour.
\newblock {F}ine{S}ur{E}: Fine-grained summarization evaluation using {LLM}s.
\newblock In \emph{Proceedings of the 62nd Annual Meeting of the Association for Computational Linguistics (Volume 1: Long Papers)}, pp.\  906--922, Bangkok, Thailand, August 2024. Association for Computational Linguistics.
\newblock \doi{10.18653/v1/2024.acl-long.51}.
\newblock URL \url{https://aclanthology.org/2024.acl-long.51/}.

\bibitem[Sottana et~al.(2023)Sottana, Liang, Zou, and Yuan]{sottana2023evaluation}
Andrea Sottana, Bin Liang, Kai Zou, and Zheng Yuan.
\newblock Evaluation metrics in the era of gpt-4: Reliably evaluating large language models on sequence to sequence tasks.
\newblock \emph{arXiv preprint arXiv:2310.13800}, 2023.

\bibitem[Szymanski et~al.(2025)Szymanski, Ziems, Eicher-Miller, Li, Jiang, and Metoyer]{szymanski2025limitations}
Annalisa Szymanski, Noah Ziems, Heather~A Eicher-Miller, Toby Jia-Jun Li, Meng Jiang, and Ronald~A Metoyer.
\newblock Limitations of the llm-as-a-judge approach for evaluating llm outputs in expert knowledge tasks.
\newblock In \emph{Proceedings of the 30th International Conference on Intelligent User Interfaces}, pp.\  952--966, 2025.

\bibitem[Tan et~al.(2024)Tan, Zhuang, Montgomery, Tang, Cuadron, Wang, Popa, and Stoica]{tan2024judgebench}
Sijun Tan, Siyuan Zhuang, Kyle Montgomery, William~Y Tang, Alejandro Cuadron, Chenguang Wang, Raluca~Ada Popa, and Ion Stoica.
\newblock Judgebench: A benchmark for evaluating llm-based judges.
\newblock \emph{arXiv preprint arXiv:2410.12784}, 2024.

\bibitem[Thakur et~al.(2024)Thakur, Choudhary, Ramayapally, Vaidyanathan, and Hupkes]{thakur2024judging}
Aman~Singh Thakur, Kartik Choudhary, Venkat~Srinik Ramayapally, Sankaran Vaidyanathan, and Dieuwke Hupkes.
\newblock Judging the judges: Evaluating alignment and vulnerabilities in llms-as-judges.
\newblock \emph{arXiv preprint arXiv:2406.12624}, 2024.

\bibitem[Trivedi et~al.(2024)Trivedi, Gulati, Molenschot, Rajeev, Ramamurthy, Stevens, Chaudhery, Jambholkar, Zou, and Rajani]{rationalization}
Prapti Trivedi, Aditya Gulati, Oliver Molenschot, Meghana~Arakkal Rajeev, Rajkumar Ramamurthy, Keith Stevens, Tanveesh~Singh Chaudhery, Jahnavi Jambholkar, James Zou, and Nazneen Rajani.
\newblock Self-rationalization improves llm as a fine-grained judge.
\newblock \emph{arXiv preprint arXiv:2410.05495}, 2024.

\bibitem[Wang et~al.(2025{\natexlab{a}})Wang, Guo, Gao, Fan, Chong, and Xia]{codeEvaluators}
Ruiqi Wang, Jiyu Guo, Cuiyun Gao, Guodong Fan, Chun~Yong Chong, and Xin Xia.
\newblock Can llms replace human evaluators? an empirical study of llm-as-a-judge in software engineering.
\newblock \emph{arXiv preprint arXiv:2502.06193}, 2025{\natexlab{a}}.

\bibitem[Wang et~al.(2025{\natexlab{b}})Wang, Zhang, and Choi]{wang2025improving}
Victor Wang, Michael J.~Q. Zhang, and Eunsol Choi.
\newblock Improving llm-as-a-judge inference with the judgment distribution, 2025{\natexlab{b}}.
\newblock URL \url{https://arxiv.org/abs/2503.03064}.

\bibitem[Wang et~al.()Wang, Yu, Yao, Zeng, Yang, Wang, Chen, Jiang, Xie, Wang, et~al.]{pandalm}
Yidong Wang, Zhuohao Yu, Wenjin Yao, Zhengran Zeng, Linyi Yang, Cunxiang Wang, Hao Chen, Chaoya Jiang, Rui Xie, Jindong Wang, et~al.
\newblock Pandalm: An automatic evaluation benchmark for llm instruction tuning optimization.
\newblock In \emph{The Twelfth International Conference on Learning Representations}.

\bibitem[Wang et~al.(2024)Wang, Guo, Yao, Zhang, Zhang, Wu, Zhang, Dai, Wen, Ye, et~al.]{wang2024autosurvey}
Yidong Wang, Qi~Guo, Wenjin Yao, Hongbo Zhang, Xin Zhang, Zhen Wu, Meishan Zhang, Xinyu Dai, Qingsong Wen, Wei Ye, et~al.
\newblock Autosurvey: Large language models can automatically write surveys.
\newblock \emph{Advances in Neural Information Processing Systems}, 37:\penalty0 115119--115145, 2024.

\bibitem[Wang et~al.(2025{\natexlab{c}})Wang, Wang, Wang, Fang, Wang, Chu, Meng, Yang, Qin, Zhang, et~al.]{wang2025temporal}
Yidong Wang, Xin Wang, Cunxiang Wang, Junfeng Fang, Qiufeng Wang, Jianing Chu, Xuran Meng, Shuxun Yang, Libo Qin, Yue Zhang, et~al.
\newblock Temporal self-rewarding language models: Decoupling chosen-rejected via past-future.
\newblock \emph{arXiv preprint arXiv:2508.06026}, 2025{\natexlab{c}}.

\bibitem[{WizardLM Team}(2023)]{wizardlm2023hf}
{WizardLM Team}.
\newblock Wizardlm-13b v1.2 (huggingface model card).
\newblock \url{https://huggingface.co/WizardLMTeam/WizardLM-13B-V1.2}, 2023.

\bibitem[Wu et~al.(2024)Wu, Yuan, Golovneva, Xu, Tian, Jiao, Weston, and Sukhbaatar]{metarewarding}
Tianhao Wu, Weizhe Yuan, Olga Golovneva, Jing Xu, Yuandong Tian, Jiantao Jiao, Jason Weston, and Sainbayar Sukhbaatar.
\newblock Meta-rewarding language models: Self-improving alignment with llm-as-a-meta-judge.
\newblock \emph{arXiv preprint arXiv:2407.19594}, 2024.

\bibitem[Xu et~al.(2023)Xu, Sun, Zheng, Geng, Zhao, Feng, Tao, and Jiang]{xu2023wizardlm}
Can Xu, Qingfeng Sun, Kai Zheng, Xiubo Geng, Pu~Zhao, Jiazhan Feng, Chongyang Tao, and Daxin Jiang.
\newblock Wizardlm: Empowering large language models to follow complex instructions.
\newblock \emph{arXiv preprint arXiv:2304.12244}, 2023.
\newblock URL \url{https://arxiv.org/abs/2304.12244}.

\bibitem[Xu et~al.()Xu, Ruis, Rockt{\"a}schel, and Kirk]{xuinvestigating}
Yi~Xu, Laura Ruis, Tim Rockt{\"a}schel, and Robert Kirk.
\newblock Investigating non-transitivity in llm-as-a-judge.
\newblock In \emph{Forty-second International Conference on Machine Learning}.

\bibitem[Yang et~al.(2024)Yang, Yang, Zhang, et~al.]{qwen2024technical}
An~Yang, Baosong Yang, Zhang, et~al.
\newblock Qwen2.5 technical report.
\newblock 2024.

\bibitem[Ye et~al.(2023)Ye, Kim, Kim, Hwang, Kim, Jo, Thorne, Kim, and Seo]{ye2023flask}
Seonghyeon Ye, Doyoung Kim, Sungdong Kim, Hyeonbin Hwang, Seungone Kim, Yongrae Jo, James Thorne, Juho Kim, and Minjoon Seo.
\newblock Flask: Fine-grained language model evaluation based on alignment skill sets.
\newblock \emph{arXiv preprint arXiv:2307.10928}, 2023.

\bibitem[Yuan et~al.()Yuan, Pang, Cho, Li, Sukhbaatar, Xu, and Weston]{selfrewarding}
Weizhe Yuan, Richard~Yuanzhe Pang, Kyunghyun Cho, Xian Li, Sainbayar Sukhbaatar, Jing Xu, and Jason~E Weston.
\newblock Self-rewarding language models.
\newblock In \emph{Forty-first International Conference on Machine Learning}.

\bibitem[Zhang et~al.()Zhang, Zhang, Wu, Xu, and Gu]{zhang2025beyond}
Yifan Zhang, Ge~Zhang, Yue Wu, Kangping Xu, and Quanquan Gu.
\newblock Beyond bradley-terry models: A general preference model for language model alignment.
\newblock In \emph{ICLR 2025 Workshop on Bidirectional Human-AI Alignment}.

\bibitem[Zheng et~al.(2023)Zheng, Chiang, Sheng, Zhuang, Wu, Zhuang, Lin, Li, Li, Xing, et~al.]{mtbench}
Lianmin Zheng, Wei-Lin Chiang, Ying Sheng, Siyuan Zhuang, Zhanghao Wu, Yonghao Zhuang, Zi~Lin, Zhuohan Li, Dacheng Li, Eric Xing, et~al.
\newblock Judging llm-as-a-judge with mt-bench and chatbot arena.
\newblock \emph{Advances in Neural Information Processing Systems}, 36:\penalty0 46595--46623, 2023.

\bibitem[Zhu et~al.(2023)Zhu, Wang, and Wang]{zhu2023judgelm}
Lianghui Zhu, Xinggang Wang, and Xinlong Wang.
\newblock Judgelm: Fine-tuned large language models are scalable judges.
\newblock 2023.

\end{thebibliography}
\bibliographystyle{main}

\appendix


\section{Related Work}
\label{app:related}
\paragraph{Traditional Discrete Evaluation Protocols}
LLM-as-a-judge frameworks have become widely adopted for their scalability and cost-efficiency in evaluating large language models. Early works predominantly relied on discrete evaluation protocols, including coarse single-score ratings and pairwise preference comparisons. MT-Bench and Chatbot Arena \cite{mtbench} demonstrated the feasibility of using powerful LLMs such as GPT-4 as judges, achieving high agreement with human preferences, while also noting issues such as verbosity and position bias. ArenaHard \cite{arenahard} proposed an automated benchmark construction pipeline and introduced Arena-Hard-Auto, a challenging benchmark curated without human-in-the-loop, which relies on LLMs to produce and evaluate responses.

AlpacaEval \cite{alpacaeval} highlighted persistent biases in LLM-based evaluation such as a preference for longer outputs. Their proposed length-controlled regression analysis mitigated this issue and improved correlation with human preferences. PandaLM \cite{pandalm} introduced a pairwise judgment protocol that incorporates subjective dimensions like clarity, formality, and instruction adherence. Its judge model outperformed even GPT-4 in certain domains and was used to tune instruction-following models.

\begin{figure}[t] 
\centering 
\includegraphics[width=0.6\textwidth]{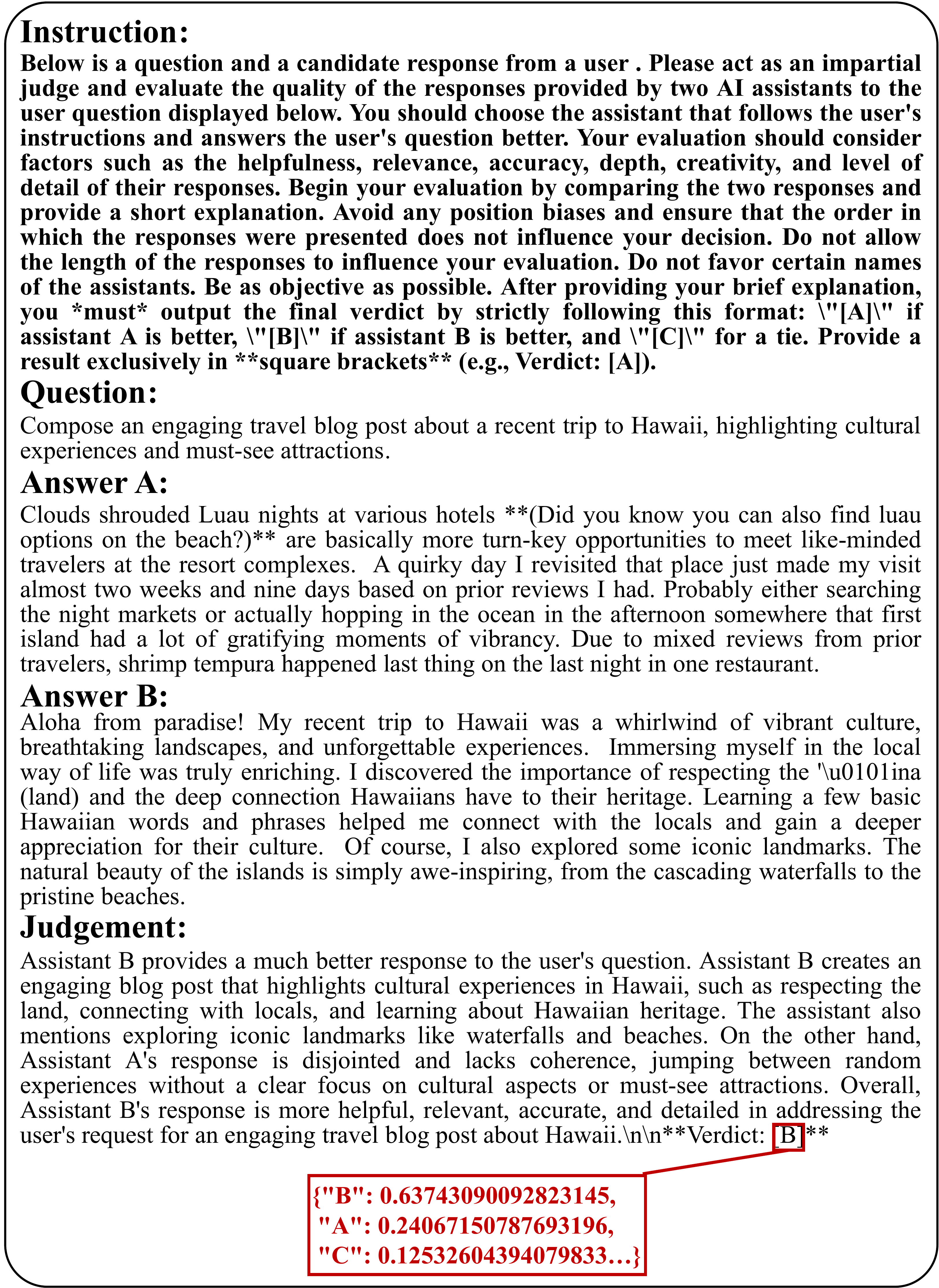} 
\caption{Example of a pairwise evaluation prompt.} 
\label{prompt-pairwise} 
\end{figure}

Other works such as \cite{rationalization, saha2023branch, que2024hellobench, ye2023flask, transeval, bai2023benchmarking, codeEvaluators} developed various discrete evaluation techniques, including majority voting, scalar ratings, skill-wise decomposition, and output-based scoring. While these methods brought interpretability and practical value, they were still constrained by coarse-grained annotations and did not fully resolve contradictions between scoring types or internal inconsistencies. Notably, \cite{transeval} proposed AUTOMQM for machine translation, which incorporated structured error labeling but remained within the paradigm of fixed-score prompting. \cite{bai2023benchmarking} proposed a language model examiner framework combining scoring and ranking, but without entropy-aware mechanisms. Similarly, \cite{codeEvaluators} employed output-based scoring for software engineering tasks, emphasizing alignment with human evaluation but without probabilistic modeling. Additionally, \cite{rationalization} used iterative self-rationalization for enhancing model rationales but still within a discrete scoring.
\raggedbottom
\paragraph{Probabilistic and Fine-Grained Evaluation Methods}
To overcome the limitations of discrete judgments, recent research has explored probabilistic evaluation strategies. G-EVAL \cite{geval} introduced softmax-normalized score prediction over a fine-grained rating scale using chain-of-thought prompting and form-filling, improving alignment with human preferences. \cite{wang2025improving} further examined extracting fine-grained preferences by leveraging the distributional output of judge models, demonstrating that methods incorporating distributional judgments significantly outperform traditional greedy decoding across various evaluation scenarios.

Our work builds upon and extends this direction by proposing TrustJudge, a probabilistic evaluation framework that preserves judgment entropy and explicitly resolves both score-comparison and pairwise transitivity inconsistencies in LLM-as-a-judge paradigms.

\section{Limitations}
\label{app:limitations}
Despite the demonstrated efficacy of TrustJudge, our approach still has some inherent limitations. Firstly, the performance of TrustJudge is fundamentally dependent on the instruction-following capabilities of the employed evaluation models. Smaller-scale language models often exhibit weaker instruction comprehension and execution capabilities, which could result in failure to yield valid scores or comparisons. Consequently, the quality and reliability of TrustJudge evaluations are directly tied to the underlying judge model's competence, emphasizing the importance of model ability.


\section{Prompt Examples}
\label{app-prompt}

The following figures~\ref{prompt-pairwise} and \ref{prompt-single} provide examples of evaluation prompts used to assess responses. The first figure shows a pairwise comparison prompt, where two responses are compared and one is selected as better. The second figure illustrates a single-score evaluation prompt with the 5-point scale, where a response is rated based on quality metrics such as helpfulness and relevance. These examples are intended to support clarity and consistency in LLM-as-a-judge evaluation tasks.

\section{Inference Settings}
\label{app-inference}
Specifically, we included strong open-source models such as Llama-3-Athene-70B \cite{nexusflow2024athene70b}, Llama-3-70B-Instruct \cite{llama3}, and Llama-3-8B-Instruct \cite{llama3}; strong closed-source models such as GPT-4o \cite{openai_gpt4o_2024}, GPT-4-Turbo \cite{openai_gpt4o_2024}, and Claude 3 Sonnet \cite{anthropic2024claude3}; weak open-source models including WizardLM-13B-v1.2 \cite{xu2023wizardlm,wizardlm2023hf}, Vicuna-7B \cite{chiang2023vicuna,lmsys2023hf7b}, and Alpaca-13B \cite{chavinlo2023alpaca13b}; and weak closed-source models such as Claude 3 Haiku \cite{anthropic2024claude3} and GPT-3.5-Turbo \cite{openai_gpt35_2023}. We use batched inference of vLLM to accelerate the generation and judging process, setting the temperature to 1.0, the maximum number of tokens to 2048, and providing the top 20 log probabilities for each generated token.


\section{DPO Training Settings}
\label{app-dpo-training}
In DPO training, models are trained for one epoch with a learning rate of $5.0 \times 10^{-7}$. The temperature parameter $\beta$ is set at $0.1$. A global batch size of $32$ is used, with $4$ samples per device across $8$ GPUs. The training process employs a cosine learning rate schedule, incorporating a warmup phase that accounts for $10\%$ of the total training steps. The maximum sequence length is maintained at $2048$ tokens, while the maximum prompt length is limited to $512$ tokens.

\begin{figure}[t] 
\centering 
\includegraphics[width=0.6\textwidth]{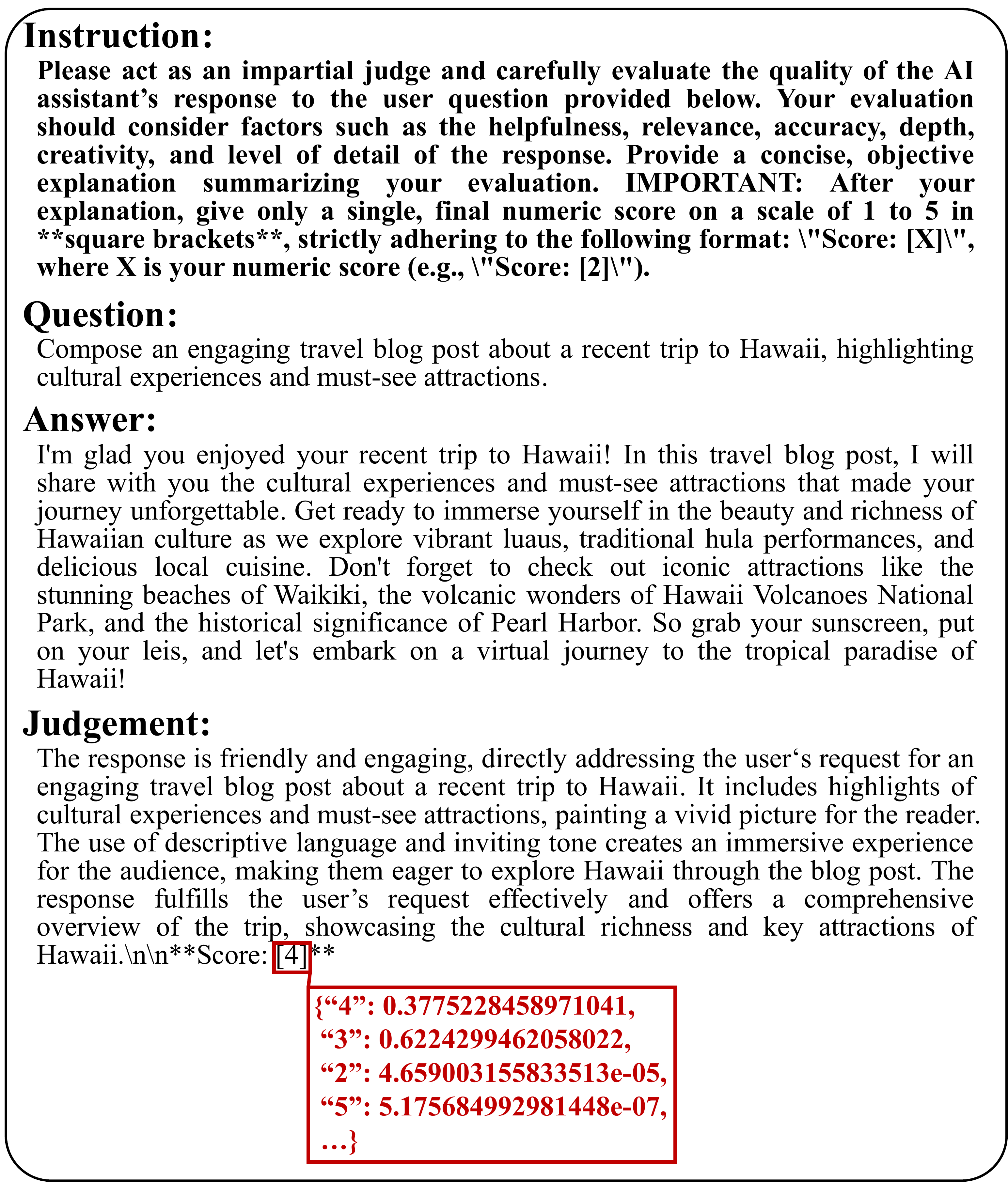} 
\caption{Example of a single-score evaluation prompt with the 5-point scale.} 
\label{prompt-single} 
\end{figure}

\section{Extension to Multi-Dimensional Evaluation}
\label{app:multi-dim}

\paragraph{Setup.}
To assess whether TrustJudge can be extended to multi-dimensional evaluation, we evaluate three sub-dimensions: factuality, coherence, and helpfulness. We randomly sample 120 questions from Arena Hard dataset. For each question, candidate responses and judgements are generated by models from the Llama, Qwen, Gemma, and GPT families. Each sub-dimension uses a dimension-specific prompt, which we show in Figures~\ref{fig:single-score-sub-dimension} and~\ref{fig:pariwise-sub-dimension}
 respectively.

For each sub-dimension, we independently compute two degrees of inconsistencies: (i) Score–Comparison Inconsistency reported as $CR$ and Pairwise Transitivity Inconsistency reported as ${NTR}_k$ for $k\in\{3,4\}$. For brevity, Table~\ref{tab:results-for-multi-dimensional-evaluation} present the averages of these metrics across the three sub-dimensions, while all metrics are computed per dimension as specified above.

\begin{figure}[htbp]
    \centering
    \includegraphics[width=0.7\linewidth]{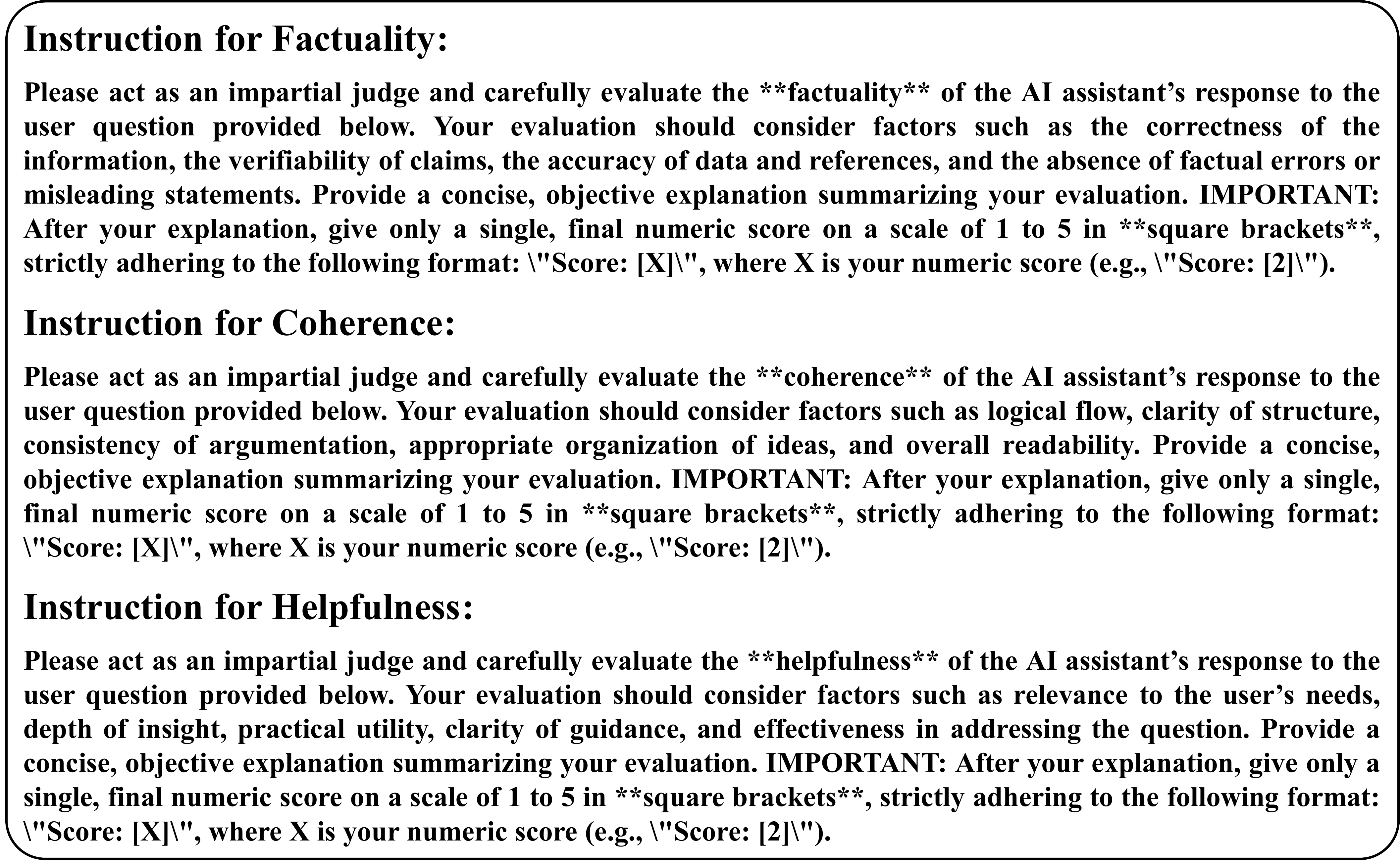}
    \caption{Example of single-score prompts for multi-dimension evaluation.}
    \label{fig:single-score-sub-dimension}
\end{figure}

\begin{figure}[htbp]
    \centering
    \includegraphics[width=0.8\linewidth]{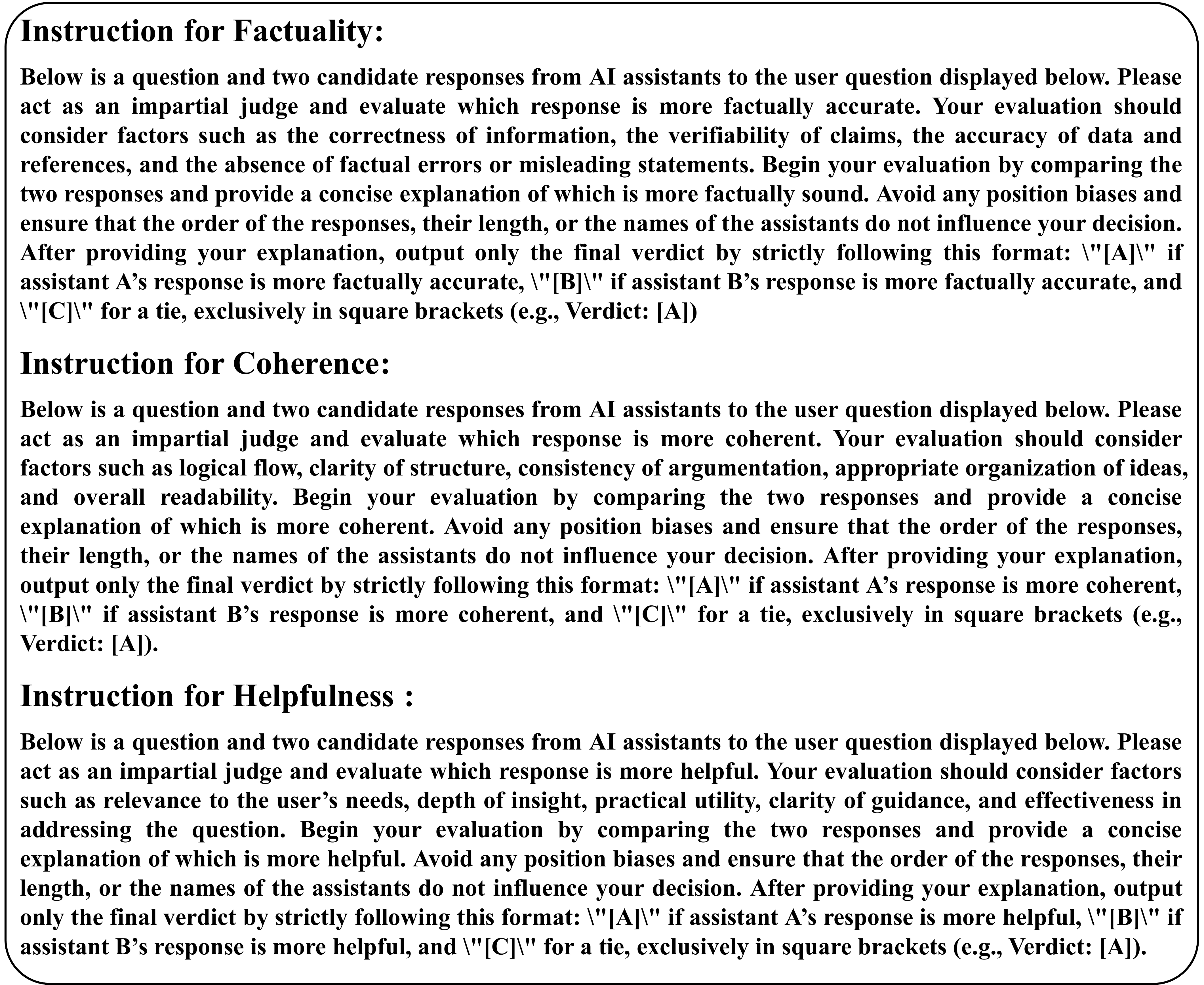}
    \caption{Example of pairwise prompts for multi-dimension evaluation.}
    \label{fig:pariwise-sub-dimension}
\end{figure}

\paragraph{Results and analysis.}

\begin{table}[ht]
  \centering
    \caption{Results for multi-dimensional evaluation across three sub-dimensions—factuality, coherence, and helpfulness. For each sub-dimension, CR and $\mathrm{NTR}_{k}$ are computed independently; tables report the mean across sub-dimensions.}
    \label{tab:results-for-multi-dimensional-evaluation}
    \setlength{\tabcolsep}{5pt}
    \begin{tabular}{lcccccc}
      \toprule
      & \multicolumn{2}{c}{CR (\%)} & \multicolumn{2}{c}{$\mathrm{NTR}_{k=3}$ (\%)} & \multicolumn{2}{c}{$\mathrm{NTR}_{k=4}$ (\%)} \\
      \cmidrule(lr){2-3}\cmidrule(lr){4-5}\cmidrule(lr){6-7}
      \textbf{Model} & Baseline & Ours & Baseline & Ours & Baseline & Ours \\
      \midrule
      Gemma-2-27b-it         & 49.43 & \textbf{44.30} & 19.60 &  \textbf{8.20} & 48.76 & \textbf{22.41} \\
      Qwen2.5-32B-Instruct   & 45.73 & \textbf{37.87} & 17.38 &  \textbf{7.89} & 42.55 & \textbf{22.36} \\
      Llama-3.1-70B-Instruct & 52.20 & \textbf{41.47} & 18.29 &  \textbf{5.48} & 44.65 & \textbf{16.21} \\
      \bottomrule
    \end{tabular}
\end{table}

Extending the judge to three axes—factuality, coherence, and helpfulness—yields a clear reduction in inconsistency. With dimension-specific prompts and per-dimension computation, we observe drops on every model and on both of the metrics: CR decreases by roughly 5.13\%–11.03\%, while $\mathrm{NTR}_3$ and $\mathrm{NTR}_4$ fall more sharply, on average by 11.23\%–24.99\%.

The pattern is most visible with Llama-3.1-70B, where $\mathrm{NTR}_4$ contracts from 44.65\% to 16.21\% and $\mathrm{NTR}_3$ from 18.29\% to 5.48\%, alongside a CR decline from 52.20\% to 41.47\%. Qwen2.5-32B and Gemma-2-27B-Instruct show the same direction of change; even where CR narrows more modestly (e.g., Gemma 49.43\%\,$\rightarrow$\,44.30\%), pairwise non-transitivity is still more than halved (48.76\%\,$\rightarrow$\,22.41\%). Taken together, the improvements persist when quality is decomposed into orthogonal components rather than measured as a single undifferentiated score.

Mechanistically, the scalar channel benefits from distribution-sensitive scoring, which smooths discretization artifacts and reduces clashes between numeric scores and pairwise preferences, lowering CR. The pairwise channel benefits from likelihood-aware aggregation with calibrated tie handling, which suppresses position bias, lowering NTR. Because these effects arise within each dimension before averaging, the evidence indicates genuine generalization of TrustJudge to multi-dimensional evaluation.

\section{Generalization Across Dataset Categories}
\label{app:category-generalization}

\paragraph{Setup.}
To assess whether our observations generalize across task types, we used 120 prompts from MT-Bench and Arena Hard as the main experiment; for each prompt we independently collected ten model responses so as to obtain a quality-diverse distribution of outputs, yielding a total of 1,200 responses. The 120 prompts were assigned to eight predefined MT-Bench categories as shown in Figure~\ref{fig:cat-pie}. Evaluation was performed with three judges — Qwen2.5-7B-Instruct, Llama-3.1-8B-Instruct, and Gemma-2-9b-it — which each assessed all 1,200 responses using both (i) a single-score assessment on a 5-point scale and (ii) pairwise comparisons between responses. Per category we report the Conflict Ratio under the 5-point and the Non-Transitivity Ratio with $k=4$. Results are presented both aggregated across categories and broken down by the eight MT-Bench categories to illustrate the variation in inconsistency patterns across task types.

\begin{figure}[htbp]
    \centering
    \includegraphics[width=.3\linewidth]{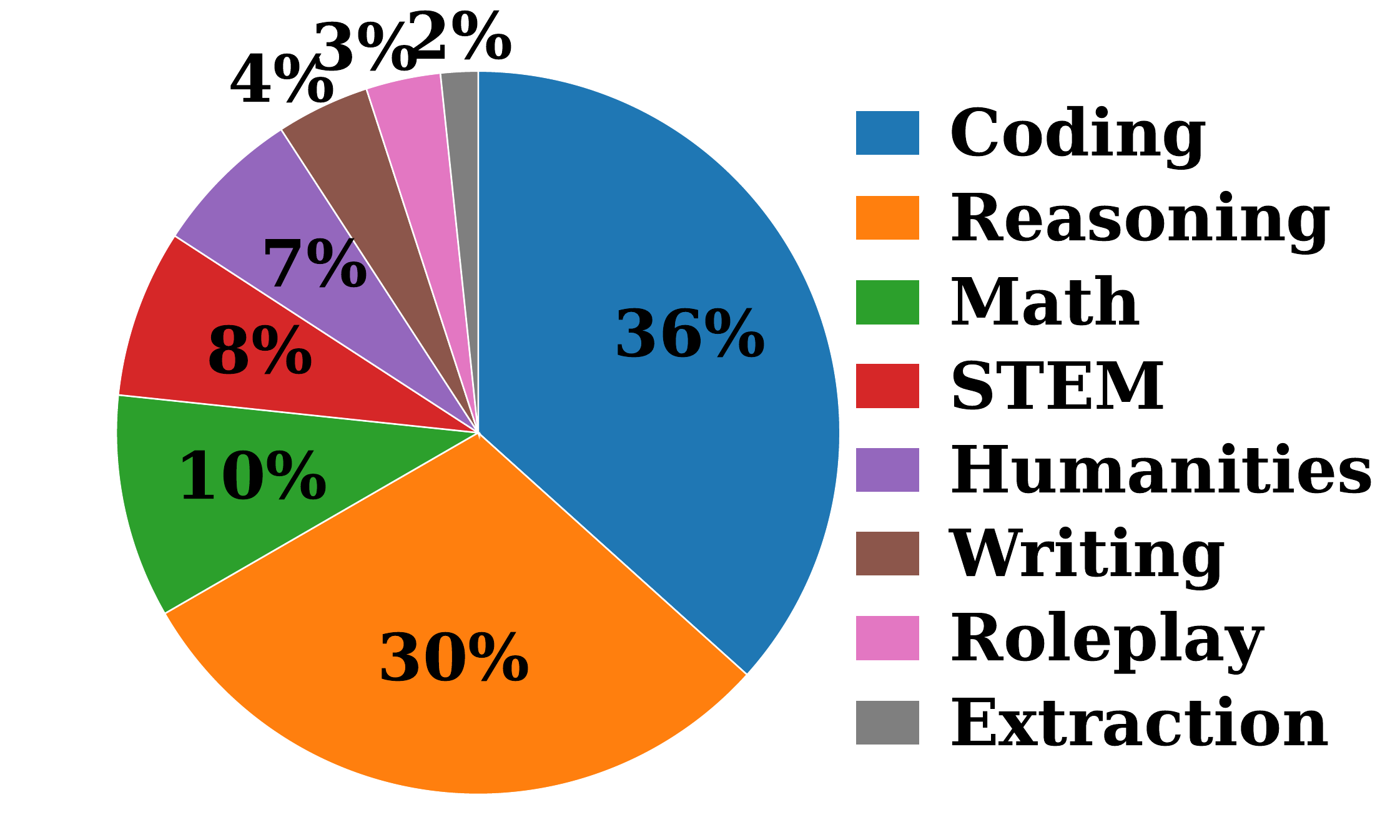}
    \caption{Category distribution across the eight dataset categories.}
    \label{fig:cat-pie}
\end{figure}

\paragraph{Results and analysis.}

As shown in Table~\ref{tab:cat-merged-method-cols}, across eight MT-Bench categories and three judges, the clearest pattern is in pairwise transitivity consistency: non-transitivity ratio collapses from 18.74\% under the two-pass baseline to 4.40\% with likelihood-aware aggregation method and 5.64\% with the PPL-based method (averaged over all 24 category–judge cells). That reduction is uniform—almost every category and every judge shows single-digit NTR after applying our pairwise aggregation, with extremes such as Llama–STEM reaching 0.00\%, and large cuts in difficult regimes like Qwen–Math (32.85\% \(\rightarrow\) 4.46\%). In short, once responses are compared bidirectionally with likelihood-aware tie handling, residual inconsistencies are rare regardless of task type.

Score-Comparison Conflicts show a more nuanced, category-dependent story. Averaged over all cells, CR drops from 23.32\% to 20.63\% with distribution-sensitive scoring. However, looking category-wise, our method is the best (or tied best) in three of eight groups that emphasize open-ended generation—Coding (Ours 21.78\% vs. G-Eval 22.13\%), Reasoning (Ours 20.72\% vs. G-Eval 21.17\%), and Writing (Ours 23.93\% vs. G-Eval 24.09\%)—while G-Eval leads in STEM, Humanities, Roleplay, and Extraction. Math is the lone case where the raw baseline edges out both methods by a small margin (Baseline 19.41\% vs. Ours 19.55\%/ G-Eval 20.10\%). These contrasts suggest that when responses span a wider stylistic or pragmatic range, TrustJudge that preserves rating entropy tends to reduce score-comparison inconsistency; when the signal is more templated or tightly factual, G-Eval probability summation can be slightly better calibrated.

\begin{table}[H]
  \centering
  \footnotesize
  \caption{Results for two category-wise inconsistencies. Left block (Score--Comparison Inconsistency): Baseline, G\text{-}Eval probability-summation, and TrustJudge's distribution-sensitive scoring on a 5-point scale. Right block (Pairwise Transitivity Inconsistency): two-pass swap-order Baseline, TrustJudge’s likelihood-aware aggregation (Option B in \ref{alg:trustjudge}), and PPL-based method (Option A in \ref{alg:trustjudge}). Judges are Llama-3.1-8B-Instruct (``Llama''), Qwen2.5-7B-Instruct (``Qwen''), and gemma-2-9b-it (``Gemma'').}
  \label{tab:cat-merged-method-cols}
  \setlength{\tabcolsep}{4pt}
  \begin{tabularx}{\linewidth}{@{} c c *{6}{Y} @{}}
    \toprule
    \multirow{2}{*}{\textbf{Category}} & \multirow{2}{*}{\textbf{Model}}
      & \multicolumn{3}{c}{\textbf{Score–Comparison (CR, \%)}} 
      & \multicolumn{3}{c}{\textbf{Pairwise Transitivity (NTR$_{k=4}$, \%)}} \\
    \cmidrule(lr){3-5}\cmidrule(lr){6-8}
    & & \textbf{Baseline} & \textbf{G\text{-}Eval} & \textbf{Ours}
      & \textbf{Baseline} & \textbf{Likelihood}  & \textbf{PPL-based} \\
    \midrule
    \multirow{3}{*}{Coding}
      & Llama  & 31.19 & 27.74 & \textbf{27.59} & 22.07 & \textbf{3.72} & 7.80 \\
      & Qwen   & 26.14 & 25.33 & \textbf{23.69} & 19.86 & \textbf{4.95} & 6.19 \\
      & Gemma  & 18.52 & \textbf{13.33} & 14.07 & 16.76 & \textbf{3.81} & 5.91 \\
    \midrule
    \multirow{3}{*}{Reasoning}
      & Llama  & 31.18 & \textbf{25.79} & 25.90 & 22.08 & \textbf{5.01} & 6.87 \\
      & Qwen   & 27.53 & 29.35 & \textbf{26.48} & 23.93 & \textbf{5.56} & 9.69 \\
      & Gemma  & 10.23 & \textbf{8.37} & 9.77 & 14.13 & \textbf{2.52} & 5.71 \\
    \midrule
    \multirow{3}{*}{Math}
      & Llama  & 24.24 & 25.25 & \textbf{24.24} & 23.26 & \textbf{4.86} & 5.21 \\
      & Qwen   & \textbf{26.63} & 30.65 & 28.54 & 32.85 & \textbf{4.46} & 9.64 \\
      & Gemma  & 7.35  & \textbf{4.41} & 5.88 & 16.84 & \textbf{4.29} & 6.48 \\
    \midrule
    \multirow{3}{*}{STEM}
      & Llama  & 25.62 & \textbf{17.77} & 19.42 & 9.03 & 1.94 & \textbf{0.00} \\
      & Qwen   & 29.35 & 26.75 & \textbf{26.23} & 23.07 & \textbf{3.68} & 5.70 \\
      & Gemma  & 9.52  & \textbf{4.76} & \textbf{4.76} & 9.47 & \textbf{1.11} & 3.23 \\
    \midrule
    \multirow{3}{*}{Humanities}
      & Llama  & 27.08 & \textbf{21.67} & \textbf{21.67} & 19.14 & 4.29 & \textbf{4.00} \\
      & Qwen   & 23.88 & 21.49 & \textbf{20.30} & 20.38 & 3.86 & \textbf{3.41} \\
      & Gemma  & 12.24 & \textbf{2.04} & 6.12 & 7.81 & 2.12 & \textbf{1.63} \\
    \midrule
    \multirow{3}{*}{Writing}
      & Llama  & 38.71 & \textbf{30.97} & \textbf{30.97} & 23.10 & \textbf{2.07} & 14.83 \\
      & Qwen   & \textbf{20.95} & 30.48 & 30.00 & 26.19 & 10.71 & \textbf{5.06} \\
      & Gemma  & 18.92 & \textbf{10.81} & \textbf{10.81} & 11.43 & 3.62 & \textbf{1.90} \\
    \midrule
    \multirow{3}{*}{Roleplay}
      & Llama  & 35.04 & 29.91 & \textbf{27.35} & 12.50 & 5.47 & \textbf{1.56} \\
      & Qwen   & 28.49 & \textbf{26.16} & 28.49 & 24.69 & \textbf{6.76} & 7.70 \\
      & Gemma  & 16.07 & \textbf{5.36} & 14.29 & 10.71 & \textbf{4.76} & 6.43 \\
    \midrule
    \multirow{3}{*}{Extraction}
      & Llama  & 40.63 & \textbf{34.38} & 35.94 & 18.87 & \textbf{3.77} & \textbf{3.77} \\
      & Qwen   & 30.12 & \textbf{30.12} & 32.53 & 28.85 & \textbf{4.23} & 7.69 \\
      & Gemma  & 0.00 & 0.00 & 0.00 & 12.62 & 8.10 & \textbf{5.00} \\
    \bottomrule
  \end{tabularx}
\end{table}


Practically, the category study shows the generalization of TrustJudge. The likelihood-aware aggregation and PPL-based method are robust to task type, driving down inconsistencies nearly everywhere. The distribution-sensitive scoring is competitive overall and tends to be strongest where outputs are diverse and rubric-driven (coding, reasoning, writing).

\section{Theoretical Derivation}
\label{appendix:Derivation}

\subsection{Theoretical Analysis of Distribution-Sensitive Scoring}
In the LLM-as-a-Judge paradigm, a judge model $M$ assesses a given response $R$. The model's internal assessment can be conceptualized as a conditional probability distribution over a discrete set of possible scores $\Theta = \{s_1, \dots, s_k\}$. We denote this probability mass function (PMF) as $p_R(s) \triangleq P_M(S=s | R)$, where $S$ is a random variable representing the score. The uncertainty or ambiguity in this assessment is captured by the conditional entropy:
\begin{equation}
\begin{split}
H(S | R) = - \sum_{s \in \Theta} p_R(s) \log p_R(s)
\end{split}
\end{equation}

Traditional discrete scoring protocols extract a single score by taking the mode of this distribution. We define the discrete scoring function $f_{\text{Discrete}}$ as:
\[
f_{\text{Discrete}}: \Delta^{k-1} \to \Theta, \quad f_{\text{Discrete}}(p_R) = \argmax_{s \in \Theta} \ p_R(s)
\]
where $\Delta^{k-1}$ is the $(k-1)$-simplex representing all possible probability distributions over the $k$ scores. This function maps a probability distribution to a single point estimate. The core issue with this approach is that the $\argmax$ operator is non-injective; it discards all information about the distribution's shape and uncertainty (entropy), mapping distinct belief states to the same output score. This information loss is a primary source of score-comparison inconsistencies.

In contrast, our proposed distribution-sensitive scoring function, $f_{\text{DS}}$, computes the expected value of the score distribution:
\[
f_{\text{DS}}: \Delta^{k-1} \to \mathbb{R}, \quad f_{\text{DS}}(p_R) = \mathbb{E}_{S \sim p_R}[S] = \sum_{s \in \Theta} s \cdot p_R(s)
\]
This function maps the entire probability distribution to a continuous scalar value, preserving more information about the underlying assessment. The following theorem formalizes the information preservation property of $f_{\text{DS}}$ compared to the information loss inherent in $f_{\text{Discrete}}$.

\begin{theorem}[Information Loss of Discrete Scoring and Preservation in Expectation]
\label{appendix_thm:info_loss}
Let $p_{R_1}$ and $p_{R_2}$ be two distinct probability distributions over the score set $\Theta$ representing the judge model's assessment of two different responses, $R_1$ and $R_2$ (i.e., $p_{R_1} \neq p_{R_2}$). The discrete scoring function $f_{\text{Discrete}}$ can fail to distinguish between these two assessments, whereas the distribution-sensitive scoring function $f_{\text{DS}}$ provides a mechanism for their discrimination. Specifically:
\begin{enumerate}
    \item \textbf{(Information Loss):} There exist $p_{R_1} \neq p_{R_2}$ with different conditional entropies, $H(S|R_1) \neq H(S|R_2)$, such that their discrete scores are identical: $f_{\text{Discrete}}(p_{R_1}) = f_{\text{Discrete}}(p_{R_2})$.
    \item \textbf{(Information Preservation):} For the same distributions $p_{R_1}$ and $p_{R_2}$ constructed in (1), their distribution-sensitive scores are distinct: $f_{\text{DS}}(p_{R_1}) \neq f_{\text{DS}}(p_{R_2})$.
\end{enumerate}
\end{theorem}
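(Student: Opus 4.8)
The plan is to prove both parts at once by an explicit \emph{constructive counterexample}: I will exhibit two probability distributions over a fixed score set that share the same mode but differ in both their mean and their entropy. Everything then reduces to checking three elementary numerical facts, so no general machinery is required. The structural reason this is possible is that $f_{\text{Discrete}}(p_R) = \argmax_{s \in \Theta} p_R(s)$ depends only on \emph{which} coordinate of $p_R$ is largest; its fibers (the sets of distributions sharing a fixed mode) are therefore full-dimensional regions of the simplex $\Delta^{k-1}$, and within a single fiber both the expected value $\sum_s s\,p_R(s)$ and the entropy $H(S|R)$ can be made to vary freely. The task is just to pin down two convenient points in one fiber.

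Concretely, I would take $\Theta = \{1,2,3,4,5\}$ and set
\begin{align}
p_{R_1} &= (0,\ 0.3,\ 0.5,\ 0.2,\ 0),\\
p_{R_2} &= (0.1,\ 0,\ 0.5,\ 0,\ 0.4),
\end{align}
listing the masses in order of increasing score. Both place their unique maximum ($0.5$) on the score $s=3$, so $f_{\text{Discrete}}(p_{R_1}) = f_{\text{Discrete}}(p_{R_2}) = 3$, which gives the coincidence of discrete scores required by part (1). A one-line computation yields the means $f_{\text{DS}}(p_{R_1}) = 2.9$ and $f_{\text{DS}}(p_{R_2}) = 3.6$, which are distinct and therefore establish part (2) directly, using the very same pair. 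For part (1) it remains to compare the entropies $H(S|R_1) = -(0.3\log 0.3 + 0.5\log 0.5 + 0.2\log 0.2)$ and $H(S|R_2) = -(0.1\log 0.1 + 0.5\log 0.5 + 0.4\log 0.4)$; after cancelling the common $0.5\log 0.5$ term, comparing the two residual sums shows $H(S|R_1) \neq H(S|R_2)$, and the proof is complete.

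The step that needs genuine care is arranging for the entropies to differ \emph{simultaneously} with a fixed mode and separated means. This is where most naive attempts fail: a symmetric reweighting around the mode can change the mean while preserving the entropy, and redistributing mass symmetrically can change the entropy while fixing the mean, so neither alone suffices. The construction therefore has to be deliberately \emph{asymmetric} in both its support and its weights, as above, so that all three quantities move in an uncoordinated way. I expect this to be the only subtle point; the remaining verifications are pure arithmetic, and notably none of them invokes the softmax normalization, since in the theoretical formulation $f_{\text{DS}}$ is the bare expectation $\E_{S\sim p_R}[S]$.
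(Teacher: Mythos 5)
Your proposal is correct and follows essentially the same strategy as the paper's proof: a single explicit constructive counterexample exhibiting two distributions with a common unique mode but different entropies and different expectations, which settles both parts at once. The only difference is cosmetic---the paper uses a parametric two-point family $p_{R_i}(s_m)=1-\epsilon_i$, $p_{R_i}(s_a)=\epsilon_i$ with $\epsilon_1\neq\epsilon_2\in(0,0.5)$ (proving entropy difference via monotonicity of the binary entropy and mean difference via the identity $(\epsilon_1-\epsilon_2)(s_a-s_m)=0$), whereas you fix a concrete pair on $\Theta=\{1,\dots,5\}$ and verify the three facts by direct arithmetic, all of which checks out ($2.9\neq 3.6$, and the residual entropy sums indeed differ).
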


\begin{proof}
We will prove the theorem by formal symbolic construction.

Let the score set be $\Theta$. Let us choose two distinct scores $s_m, s_a \in \Theta$ such that $s_m \neq s_a$. Let $s_m$ be the intended mode of our distributions. Further, let us choose two distinct real numbers $\epsilon_1$ and $\epsilon_2$ such that $0 < \epsilon_1, \epsilon_2 < 0.5$ and $\epsilon_1 \neq \epsilon_2$. The condition $\epsilon < 0.5$ ensures that $1-\epsilon > \epsilon$, which will be necessary to establish $s_m$ as the unique mode. The condition $\epsilon_1 \neq \epsilon_2$ ensures the resulting distributions are distinct.

Consider two responses, $R_1$ and $R_2$, which elicit two different internal belief distributions from the judge model, defined as follows:
\begin{enumerate}
    \item Let $p_{R_1}$ be a probability mass function (PMF) where the probability mass is concentrated on $s_m$ and $s_a$:
    \[
    p_{R_1}(s) = \begin{cases} 1 - \epsilon_1 & \text{if } s = s_m \\ \epsilon_1 & \text{if } s = s_a \\ 0 & \text{otherwise} \end{cases}
    \]
    \item Let $p_{R_2}$ be a second, distinct PMF, also concentrated on $s_m$ and $s_a$ but with a different balance:
    \[
    p_{R_2}(s) = \begin{cases} 1 - \epsilon_2 & \text{if } s = s_m \\ \epsilon_2 & \text{if } s = s_a \\ 0 & \text{otherwise} \end{cases}
    \]
\end{enumerate}
Since $\epsilon_1 \neq \epsilon_2$, we have $p_{R_1} \neq p_{R_2}$.

\subsubsection*{Part 1: Proving Information Loss in $f_{\text{Discrete}}$}
We apply the discrete scoring function $f_{\text{Discrete}}$ to both distributions. By our choice of $\epsilon_1, \epsilon_2 \in (0, 0.5)$, we have $1-\epsilon_1 > \epsilon_1$ and $1-\epsilon_2 > \epsilon_2$. Therefore, the mode for both distributions is uniquely $s_m$:
\begin{align*}
    f_{\text{Discrete}}(p_{R_1}) &= \argmax_{s \in \Theta} \ p_{R_1}(s) = s_m \\
    f_{\text{Discrete}}(p_{R_2}) &= \argmax_{s \in \Theta} \ p_{R_2}(s) = s_m
\end{align*}
Thus, we have shown that for two distinct distributions $p_{R_1}$ and $p_{R_2}$, it is possible that $f_{\text{Discrete}}(p_{R_1}) = f_{\text{Discrete}}(p_{R_2})$.

Now, we consider their conditional entropies. The entropy of these distributions is a function of $\epsilon$:
\begin{align*}
    H(S|R_1) &= -((1-\epsilon_1)\log(1-\epsilon_1) + \epsilon_1\log\epsilon_1) \\
    H(S|R_2) &= -((1-\epsilon_2)\log(1-\epsilon_2) + \epsilon_2\log\epsilon_2)
\end{align*}
The binary entropy function $H(p) = -p\log p - (1-p)\log(1-p)$ is strictly increasing on the interval $(0, 0.5)$. Since we chose $\epsilon_1 \neq \epsilon_2$ within this interval, it follows that $H(S|R_1) \neq H(S|R_2)$. This confirms that $f_{\text{Discrete}}$ maps distributions with different levels of uncertainty to the same output, thereby losing information. This proves the first part of the theorem.

\subsubsection*{Part 2: Proving Information Preservation in $f_{\text{DS}}$}
Next, we apply the distribution-sensitive scoring function $f_{\text{DS}}$ to the same distributions $p_{R_1}$ and $p_{R_2}$:
\begin{align*}
    f_{\text{DS}}(p_{R_1}) = \mathbb{E}[S | R_1] &= \sum_{s \in \Theta} s \cdot p_{R_1}(s) = s_m(1-\epsilon_1) + s_a(\epsilon_1) \\
    f_{\text{DS}}(p_{R_2}) = \mathbb{E}[S | R_2] &= \sum_{s \in \Theta} s \cdot p_{R_2}(s) = s_m(1-\epsilon_2) + s_a(\epsilon_2)
\end{align*}
To demonstrate that their scores are distinct, let us assume for contradiction that they are equal:
\begin{align*}
    f_{\text{DS}}(p_{R_1}) &= f_{\text{DS}}(p_{R_2}) \\
    s_m(1-\epsilon_1) + s_a(\epsilon_1) &= s_m(1-\epsilon_2) + s_a(\epsilon_2) \\
    s_m - s_m\epsilon_1 + s_a\epsilon_1 &= s_m - s_m\epsilon_2 + s_a\epsilon_2 \\
    \epsilon_1(s_a - s_m) &= \epsilon_2(s_a - s_m) \\
    (\epsilon_1 - \epsilon_2)(s_a - s_m) &= 0
\end{align*}
This equality can only hold if $\epsilon_1 - \epsilon_2 = 0$ or $s_a - s_m = 0$. However, by our initial construction, we chose $\epsilon_1 \neq \epsilon_2$ (so $\epsilon_1 - \epsilon_2 \neq 0$) and $s_a \neq s_m$ (so $s_a - s_m \neq 0$). This leads to a contradiction.

Therefore, our assumption must be false, and it must be that $f_{\text{DS}}(p_{R_1}) \neq f_{\text{DS}}(p_{R_2})$. The distribution-sensitive scoring function successfully distinguishes between these two belief states, preserving the discriminative information lost by $f_{\text{Discrete}}$. This proves the second part of the theorem.
\end{proof}

\subsection{Theoretical Analysis of Likelihood-Aware Aggregation}
\label{appendix:Likelihood-Aware}

The PPL-based estimator is designed to resolve ambiguity. From an information-theoretic perspective, ambiguity in a discrete choice problem corresponds to a high-entropy probability distribution over the possible outcomes. The PPL-based method leverages an alternative signal---the generative likelihood of the rationale---to induce a more confident (lower-entropy) posterior belief for decision-making. The following proposition formalizes this concept.

\begin{proposition}[Uncertainty Reduction via PPL-based Method]
\label{appendix_prop:uncertainty_reduction}
Let $\mathcal{C} = \{1, -1, 0\}$ be the set of outcomes. Let $p(C|\pi)$ be the original outcome distribution from the judge model, and $H(C|\pi)$ its Shannon entropy. In an \textbf{ambiguous regime}, this distribution approaches uniformity, causing $H(C|\pi) \to \log|\mathcal{C}|$.

Let a new "confidence" distribution $p_{\text{conf}}$ be derived from the rationales $J_k$ for each outcome $k \in \mathcal{C}$:
\[
    p_{\text{conf}}(k) = \frac{\exp(-\gamma \cdot \text{PPL}(J_k))}{\sum_{i \in \mathcal{C}} \exp(-\gamma \cdot \text{PPL}(J_i))}
\]
where $\gamma > 0$ is a scaling constant. If there exists at least one outcome $k$ whose rationale has a strictly lower perplexity than another (i.e., $\exists k_1, k_2$ s.t. $\text{PPL}(J_{k_1}) < \text{PPL}(J_{k_2})$), then there exists a $\gamma$ such that the entropy of the confidence distribution is strictly lower than the maximum possible entropy:
\[
    H(p_{\text{conf}}) < \log|\mathcal{C}|
\]
This demonstrates that $\hat{C}_{PPL}$ makes a decision based on a more certain signal, reducing the judgment uncertainty present in the original ambiguous distribution.
\end{proposition}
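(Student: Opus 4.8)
The plan is to reduce the proposition to the standard maximum-entropy characterization of the uniform distribution on a finite alphabet, and then to verify that the stated hypothesis forces $p_{\text{conf}}$ to be non-uniform. Concretely, I would isolate two ingredients: (i) a strict bound saying that entropy on $m=|\mathcal{C}|$ outcomes is maximized \emph{uniquely} by the uniform law, and (ii) a monotonicity argument showing the softmax weights cannot all coincide.

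First I would state and prove the bound as a lemma: for any distribution $p$ on $\mathcal{C}$ with $m=|\mathcal{C}|$, one has $H(p)\le \log m$, with equality if and only if $p$ equals the uniform distribution $u(k)=1/m$. The clean way to get the \emph{equality condition} (not merely the inequality) is Gibbs' inequality: $\log m - H(p) = \KL(p\,\|\,u)\ge 0$, where nonnegativity and the equality case $p=u$ both follow from the strict convexity of $t\mapsto t\log t$ (equivalently, strict concavity of entropy, via Jensen). Invoking \emph{strictness} here is essential, since the whole point of the proposition is a strict drop below $\log|\mathcal{C}|$.

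Second I would show the hypothesis implies $p_{\text{conf}}\neq u$. By assumption there exist $k_1,k_2\in\mathcal{C}$ with $\text{PPL}(J_{k_1})<\text{PPL}(J_{k_2})$. For any fixed $\gamma>0$ the map $x\mapsto \exp(-\gamma x)$ is strictly decreasing, so $\exp(-\gamma\,\text{PPL}(J_{k_1}))>\exp(-\gamma\,\text{PPL}(J_{k_2}))$; since $p_{\text{conf}}$ is just a common positive normalization of these weights, $p_{\text{conf}}(k_1)>p_{\text{conf}}(k_2)$, hence $p_{\text{conf}}$ is not uniform. Notably the existential clause ``there exists a $\gamma$'' is met trivially, as \emph{every} $\gamma>0$ works. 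Combining this with the lemma yields $H(p_{\text{conf}})<\log m=\log|\mathcal{C}|$ strictly, which is the desired conclusion. To close the narrative I would contrast this with the ambiguous regime, where $H(C|\pi)\to\log|\mathcal{C}|$ by hypothesis: the PPL-induced confidence law sits strictly below the near-uniform original, making precise the sense in which the decision rests on a lower-entropy, ``more certain'' signal.

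The only delicate point — really the sole place demanding care — is the equality case in step one, i.e.\ arguing that the uniform law is the \emph{unique} maximizer rather than merely \emph{a} maximizer. I would guard against a weak $\le$ conclusion by using the strict form of Gibbs' inequality (or strict concavity of the logarithm in Jensen), so that non-uniformity of $p_{\text{conf}}$ propagates to a genuinely strict entropy gap.
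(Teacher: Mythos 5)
Your proof is correct and takes essentially the same route as the paper's: show that the hypothesis $\text{PPL}(J_{k_1})<\text{PPL}(J_{k_2})$ forces the softmax weights, and hence $p_{\text{conf}}$, to be non-uniform (indeed for \emph{every} $\gamma>0$, as you note), then invoke the fact that the uniform law is the \emph{unique} entropy maximizer to conclude the strict gap $H(p_{\text{conf}})<\log|\mathcal{C}|$. The only cosmetic difference is that you establish the uniqueness of the maximizer explicitly via the strict form of Gibbs' inequality, $\log|\mathcal{C}|-H(p)=\KL(p\,\|\,u)$, whereas the paper simply cites the strict concavity of Shannon entropy.
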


\begin{proof}
The Shannon entropy function, $H(p) = -\sum_i p_i \log p_i$, is a strictly concave function over the probability simplex. Its unique maximum is achieved when the distribution $p$ is uniform, i.e., $p_i = 1/|\mathcal{C}|$ for all $i$. In this case, $H(p) = \log|\mathcal{C}|$.

In an ambiguous regime, the original outcome distribution $p(C|\pi)$ is, by definition, near-uniform. Consequently, its entropy $H(C|\pi)$ is close to its maximum possible value, $\log|\mathcal{C}|$.

Now, consider the confidence distribution $p_{\text{conf}}$. The condition $\exists k_1, k_2$ s.t. $\text{PPL}(J_{k_1}) < \text{PPL}(J_{k_2})$ implies that the values $\exp(-\gamma \cdot \text{PPL}(J_k))$ are not all equal. As a result, after normalization, the distribution $p_{\text{conf}}$ is \textbf{not uniform}.

Because the Shannon entropy function's maximum is uniquely attained by the uniform distribution, any non-uniform distribution must have a strictly lower entropy. Therefore,
\[
    H(p_{\text{conf}}) < \max_{p} H(p) = \log|\mathcal{C}|
\]
Since $H(C|\pi) \approx \log|\mathcal{C}|$, it follows that $H(p_{\text{conf}}) < H(C|\pi)$.

The parameter $\gamma$ controls the "peakedness" of $p_{\text{conf}}$. As $\gamma \to \infty$, $p_{\text{conf}}$ approaches a Kronecker delta function centered at the outcome with the minimum PPL, and its entropy approaches zero. Thus, for any non-trivial difference in PPLs, we can always choose a $\gamma$ to make the decision signal arbitrarily certain.

This proves that the PPL-based method transforms a high-entropy (ambiguous) belief state into a lower-entropy (more certain) one, thereby providing a more discriminative signal for making a final judgment.
\end{proof}

 A desirable property of any comparison function $f(R_x, R_y)$ is symmetry, meaning that swapping the inputs should simply invert the outcome, i.e., $f(R_y, R_x) = -f(R_x, R_y)$. Single-pass estimators often violate this property due to positional bias. The following proposition proves that our bidirectional estimator is inherently stable and symmetric by construction.

\begin{proposition}[Symmetry and Stability of the Bidirectional Estimator]
\label{prop:symmetry}
Let the single-pass greedy estimator be $\hat{C}_{SP}(R_x, R_y) = \arg\max_{k} p(k|(R_x, R_y), \mathcal{M})$. Due to positional bias, this estimator is not generally symmetric, meaning there exist pairs $(R_x, R_y)$ for which $\hat{C}_{SP}(R_x, R_y) \neq -\hat{C}_{SP}(R_y, R_x)$.

In contrast, the bidirectional estimator $\hat{C}_B$ is \textbf{perfectly symmetric} for all inputs:
\[
    \hat{C}_B(R_x, R_y) = -\hat{C}_B(R_y, R_x)
\]
This property makes $\hat{C}_B$ a stable estimator with respect to the input ordering.
\end{proposition}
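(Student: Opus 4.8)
The plan is to reduce the symmetry claim to a single algebraic identity between the two aggregated score vectors, after which the conclusion follows from the bijectivity of the sign-flip map $k \mapsto -k$ on the outcome set $\mathcal{C} = \{1,-1,0\}$. First I would fix notation: write $\mathbf{p}(R_a, R_b)$ for the judge's outcome distribution over $\mathcal{C}$ when the \emph{ordered} pair $(R_a, R_b)$ is presented, indexed so that $\mathbf{p}(R_a,R_b)[1]$, $\mathbf{p}(R_a,R_b)[-1]$, $\mathbf{p}(R_a,R_b)[0]$ are the probabilities that the first slot wins, the second slot wins, and a tie, respectively. With this convention the aggregation rule $m[k] = \mathbf{p}_{\text{order}_1}[k] + \mathbf{p}_{\text{order}_2}[-k]$ instantiates, for the input $(R_x,R_y)$ (so $\text{order}_1=(R_x,R_y)$, $\text{order}_2=(R_y,R_x)$), to
\[
    m_{xy}[k] = \mathbf{p}(R_x,R_y)[k] + \mathbf{p}(R_y,R_x)[-k],
\]
and for the swapped input $(R_y,R_x)$ to
\[
    m_{yx}[k] = \mathbf{p}(R_y,R_x)[k] + \mathbf{p}(R_x,R_y)[-k].
\]

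The key step is to observe that these two vectors are exact mirror images of one another. Substituting $-k$ for $k$ in the second display and using $-(-k)=k$ yields $m_{yx}[-k] = \mathbf{p}(R_y,R_x)[-k] + \mathbf{p}(R_x,R_y)[k]$, whose right-hand side is termwise identical to $m_{xy}[k]$. Hence $m_{yx}[-k] = m_{xy}[k]$ for every $k \in \mathcal{C}$; equivalently, $m_{yx}$ is obtained from $m_{xy}$ by the involution $k \mapsto -k$, which swaps the $1$ and $-1$ coordinates and fixes $0$. This identity is essentially the whole content of the proof: it certifies that the construction treats the two presentation orders symmetrically, so no positional information survives aggregation.

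It then remains to push the identity through the $\argmax$. Let $k^{\ast} = \hat{C}_B(R_x,R_y) = \argmax_k m_{xy}[k]$. Because $k \mapsto -k$ is a bijection of $\mathcal{C}$ onto itself, maximizing $m_{yx}[j]$ over $j$ is the same as maximizing $m_{xy}[-j]$, whose optimizer is exactly $j = -k^{\ast}$; therefore $\hat{C}_B(R_y,R_x) = -k^{\ast} = -\hat{C}_B(R_x,R_y)$, as claimed. For the first (negative) assertion about $\hat{C}_{SP}$, I would simply note that nothing in its definition $\argmax_k p(k \mid (R_x,R_y), \mathcal{M})$ couples $p(\cdot \mid (R_x,R_y),\mathcal{M})$ to $p(\cdot \mid (R_y,R_x),\mathcal{M})$, so the well-documented phenomenon of positional bias—where the judge's raw single-pass distribution is not a mirror image across the two orders—furnishes pairs that violate symmetry; since this is an existence claim, a single such witness suffices.

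The one genuine obstacle is the behavior of $\argmax$ under ties: if $m_{xy}$ attains its maximum at two coordinates, then $\hat{C}_B$ is only well-defined after a tie-breaking rule is fixed, and symmetry can fail unless that rule itself commutes with the sign flip. The clean resolution, consistent with the paper's framework, is to declare any non-unique maximum (or any near-tie within tolerance $\delta$) as the outcome $0$. By the mirror identity $m_{yx}[-k]=m_{xy}[k]$, the maximum of $m_{yx}$ is unique exactly when that of $m_{xy}$ is, so uniqueness is preserved across the swap; and whenever the maximum is non-unique, both inputs are routed to $0$, which is the unique fixed point of $k \mapsto -k$. Thus $\hat{C}_B(R_x,R_y) = -\hat{C}_B(R_y,R_x)$ holds even in the degenerate case, and I would state this tie convention explicitly before invoking $\argmax$ so that the symmetry is valid for all inputs rather than merely generically.
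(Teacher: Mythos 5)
Your proposal is correct and takes essentially the same route as the paper's proof: both establish the termwise mirror identity $m(k;R_x,R_y)=m(-k;R_y,R_x)$ and then transport it through the $\argmax$ via the involution $k\mapsto -k$ on $\{1,-1,0\}$. Your explicit tie-breaking convention (routing any non-unique maximum to the outcome $0$, the fixed point of the sign flip) is a small rigor refinement that the paper's proof leaves implicit, and it is a sound way to make the symmetry claim hold literally for all inputs.
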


\begin{proof}
Let's define the aggregated score function for $\hat{C}_B(R_x, R_y)$ as $m(k; R_x, R_y) = p(k|(R_x, R_y)) + p(-k|(R_y, R_x))$. The decision is $\arg\max_k m(k; R_x, R_y)$.

Now consider the estimator for the swapped input, $\hat{C}_B(R_y, R_x)$. Its score function is $m(k; R_y, R_x) = p(k|(R_y, R_x)) + p(-k|(R_x, R_y))$.

Let's compare the score for outcome $k=1$ in the first case with the score for outcome $k=-1$ in the second case:
\begin{align*}
    m(1; R_x, R_y) &= p(1|(R_x, R_y)) + p(-1|(R_y, R_x)) \\
    m(-1; R_y, R_x) &= p(-1|(R_y, R_x)) + p(-(-1)|(R_x, R_y)) = p(-1|(R_y, R_x)) + p(1|(R_x, R_y))
\end{align*}
We see that $m(1; R_x, R_y) = m(-1; R_y, R_x)$. By the same logic, $m(-1; R_x, R_y) = m(1; R_y, R_x)$ and $m(0; R_x, R_y) = m(0; R_y, R_x)$.

This means that the score assigned to preference "$R_x \succ R_y$" in the first ordering is identical to the score assigned to preference "$R_x \succ R_y$" (which is outcome $-1$) in the second ordering. Therefore, if the maximum score in the first case is for outcome $k^*$, the maximum score in the second case must be for outcome $-k^*$. This proves $\hat{C}_B(R_x, R_y) = -\hat{C}_B(R_y, R_x)$.
\end{proof}

\end{document}